\crefname{theorem}{Theorem}{Theorems}
\crefname{lemma}{Lemma}{Lemmas}
\crefname{proposition}{Proposition}{Propositions}
\crefname{corollary}{Corollary}{Corollaries}
\crefname{definition}{Definition}{Definitions}
\crefname{remark}{Remark}{Remarks}
\newcommand{\R}{\mathbb{R}}
\newcommand{\N}{\mathbb{N}}
\newcommand{\ip}[2]{\left\langle #1, #2 \right\rangle}
\newcommand{\norm}[1]{\left\lVert #1 \right\rVert}
\newcommand{\X}{\mathcal{X}}
\newtheorem{theorem}{Theorem}[section]
\newtheorem{lemma}[theorem]{Lemma}
\newtheorem{proposition}[theorem]{Proposition}
\newtheorem{corollary}[theorem]{Corollary}
\newtheorem{definition}[theorem]{Definition}
\newtheorem{remark}{Remark}[section]
\title{%
\includegraphics[width=0.3\linewidth]{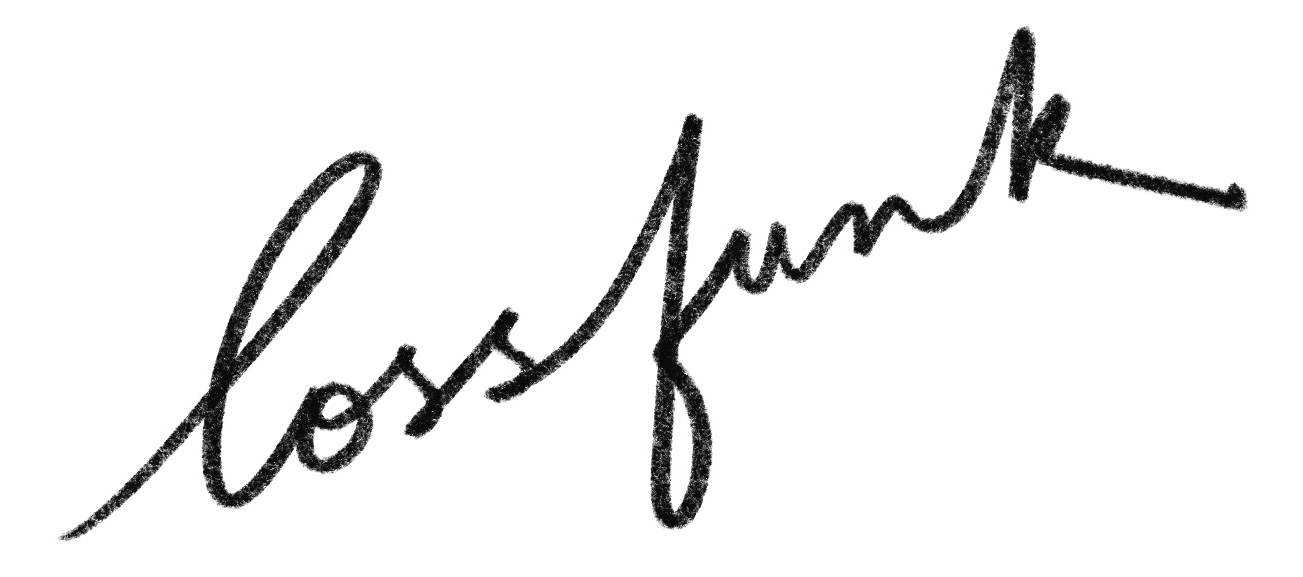} 
\\[1cm]
Small-Gain Nash: Certified Contraction to Nash Equilibria\\
in Differentiable Games}
\author{Vedansh Sharma \\
  \texttt{vedansh.research@gmail.com}}
\date{}
\begin{document}
\maketitle

\begin{abstract}
Classical convergence guarantees for gradient-based learning in games
require the pseudo-gradient to be (strongly) monotone in Euclidean
geometry~\citep{rosen1965,facchinei2003}, a condition that often fails even in
simple games with strong cross-player couplings.
We introduce \emph{Small-Gain Nash} (SGN), a block small-gain condition
in a custom block-weighted geometry.
SGN converts local curvature and cross-player Lipschitz coupling bounds
into a tractable \emph{certificate of contraction}. It constructs a
weighted block metric in which the pseudo-gradient becomes strongly
monotone on any region where these bounds hold, even when it is
non-monotone in the Euclidean sense.
The continuous flow is exponentially
contracting in this designed geometry, and projected Euler and RK4 discretizations converge under
explicit step-size bounds derived from the SGN margin and a local
Lipschitz constant.
Our analysis reveals a certified ``timescale band'', a non-asymptotic, metric-based certificate that plays a TTUR-like role: rather than forcing asymptotic timescale separation via vanishing, unequal step sizes, SGN identifies a finite band of relative metric weights for which a single-step-size dynamics is provably contractive.
We validate the framework on quadratic games where Euclidean monotonicity analysis fails to predict convergence, but SGN successfully certifies it, and extend the construction to mirror/Fisher geometries for entropy-regularized policy gradient in Markov games.
The result is an offline certification pipeline that estimates curvature,
coupling, and Lipschitz parameters on compact regions, optimizes block
weights to enlarge the SGN margin, and returns a structural, computable convergence
certificate consisting of a metric, contraction rate, and safe
step-sizes for non-monotone games.
\end{abstract}

\setlength{\parskip}{1em}
\setlength{\parindent}{0pt}

\section{Introduction}

Gradient-based learning in multi-player and adversarial settings is ubiquitous, from stylized
adversarial training problems and latent-space games to multi-agent reinforcement learning and economic
models. The resulting dynamics are typically driven by the pseudo-gradient of the players' objectives,
and one would like both the continuous-time flow and simple discrete-time schemes to converge robustly
to a Nash equilibrium. Classical convergence results assume that the pseudo-gradient operator is
(strongly) monotone in Euclidean geometry~\citep{rosen1965,facchinei2003}, an assumption violated even
by simple two-player quadratic games with convex per-player objectives and strong couplings, where
simultaneous gradient descent oscillates or diverges.

This work takes a structural, geometry-design viewpoint rather than proposing a new game algorithm.
We ask, for which differentiable games can we design a block-diagonal metric in which standard gradient-based
dynamics are contracting, and hence converge at a quantified rate to a Nash equilibrium on a certified
region?

Our starting point is a block-diagonal metric $P = \mathrm{diag}(P_i)$, where each block $P_i \succ 0$
weights player $i$'s variables, together with a positive weight vector $w \in \mathbb{R}^N_{++}$ that
rescales players relative to each other. The resulting geometry is encoded by
$M(w) := \mathrm{diag}(w_i P_i)$, and we say that the game is contracting with margin $\alpha > 0$ if
the pseudo-gradient operator $F$ is $\alpha$-strongly monotone in the $M(w)$-inner product, or
equivalently if the continuous flow $\dot{x} = -F(x)$ is $\alpha$-contracting in $\|\cdot\|_{M(w)}$.

We introduce a structural condition, \emph{Small-Gain Nash} (SGN), that uses player-wise curvature and
cross-player coupling bounds to certify such contraction in block metrics. Once a metric and SGN margin
$\alpha > 0$ have been certified on a region where these bounds hold, we show that the pseudo-gradient
flow and simple explicit schemes (projected Euler and classical RK4) are contracting in this geometry
under explicit step-size bounds expressed in terms of $\alpha$ and a Lipschitz constant $\beta$ of
$G := -F$. The pair $(\alpha,\beta)$ plays the role of a game-theoretic CFL number that translates
directly into safe step sizes on the certified region. Throughout, all structural bounds are stated on
closed convex regions where curvature, coupling, and Lipschitz conditions hold (often compact
forward-invariant sets). These are the certified regions on which our ``global'' statements should be
understood.

\paragraph{Contributions.}
At a high level, we do not propose a new game algorithm; instead, we design a geometry in which existing
gradient-based dynamics become contracting. Our main contributions are:

\begin{itemize}
\item We introduce the Small-Gain Nash (SGN) condition, a block-diagonal small-gain inequality in a
weighted block metric that, via the matrix inequality $C(w,\alpha) \succ 0$, converts curvature and
coupling bounds into strong monotonicity of the pseudo-gradient, extending Rosen's diagonal concavity and
complementing gain-matrix contraction criteria for Nash dynamics~\citep{gokhale2023contractivity} to general block metrics.

\item We show that SGN yields existence and uniqueness of a variational-inequality/Nash solution on
certified regions and exponential contraction of pseudo-gradient flow in the SGN metric, so that the continuous-time game dynamics are globally well-posed and convergent within this designed geometry.

\item We show that SGN induces a finite ``timescale band'' in the space of player weights $w$: as long as the geometric ratio $r=w_2/w_1$ lies in this band, the scalar-step pseudo-gradient dynamics is contractive in the SGN metric. In contrast to TTUR, which stabilizes by sending the step-size ratio to infinity and often imposing a large timescale separation, SGN can certify that no TTUR-style slowdown is needed for a class of games that look unstable in Euclidean geometry.

\item We develop an offline metric-and-timescale certification pipeline that, on a compact region,
estimates curvature, couplings, and Lipschitz constants from Hessian/Jacobian probes, optimizes over
weights $w$ to obtain a certificate $(\alpha,\beta,M(w))$, and validates the framework on non-monotone
quadratic games and a tabular Markov game, via mirror/Fisher and Markov-game extensions.
\end{itemize}

\section{Preliminaries}

\subsection{Games, pseudo-gradients, and Nash equilibria}

Let $N\in\N$ be the number of players, with convex, closed strategy sets $\X_i\subseteq\R^{d_i}$ and continuously differentiable cost functions $f_i:\X\to\R$ on $\X:=\prod_{i=1}^N \X_i$.
We interpret $f_i$ as a cost (loss) to be minimized by player~$i$.
The \emph{pseudo-gradient} is the block operator
\[
  F(x) := \begin{bmatrix}
    \nabla_{x_1} f_1(x)\\
    \vdots\\
    \nabla_{x_N} f_N(x)
  \end{bmatrix}
  \in\R^d,\qquad d=\sum_{i=1}^N d_i.
\]
A point $x^*\in\X$ is a (first-order) Nash equilibrium if for each $i$,
\[
  \ip{\nabla_{x_i} f_i(x^*)}{x_i - x_i^*} \;\ge\; 0
  \qquad\text{for all }x_i\in\X_i.
\]
Given an SPD matrix $M\succ 0$ with inner product $\ip{\cdot}{\cdot}_M$, the variational inequality $\mathrm{VI}_M(F,\X)$ asks for $x^*\in\X$ such that
\[
  \ip{F(x^*)}{x-x^*}_{M} \;\ge\; 0
  \qquad\text{for all }x\in\X.
\]
When $M=I$ we write $\mathrm{VI}(F,\X)$; on product sets and under the usual convexity assumptions on $\X_i$ and $f_i(\cdot,x_{-i})$, solutions of $\mathrm{VI}(F,\X)$ coincide with Nash equilibria~\citep[Section~1.4]{facchinei2003}.
Since $M\succ 0$ is invertible, $\mathrm{VI}_M(F,\X)$ is equivalent, via a linear change of variables, to a Euclidean VI for a transformed operator.
Thus $M$ serves primarily as an analysis tool; for unconstrained games, interior equilibria, or cases where $M$ is compatible with the constraints (e.g., diagonal $M$ with box constraints), solutions of $\mathrm{VI}_M(F,\X)$ coincide with standard Nash equilibria.

All structural bounds (curvature, couplings, Lipschitz constants) will be stated on a closed convex region $\mathcal{R}\subseteq\X$, often a compact forward-invariant set, on which they hold; unless noted otherwise, ``global'' statements refer to such a certified region (or to all of $\R^d$ when the bounds hold everywhere).

\subsection{Block metrics and induced norms}

For each player $i$, let $P_i\in\R^{d_i\times d_i}$ be symmetric positive definite (SPD), and let $P:=\mathrm{diag}(P_1,\dots,P_N)$.
For $v=(v_1,\dots,v_N)\in\R^d$ we define the $P$-norm and inner product by
\[
  \norm{v}_P^2 := \sum_{i=1}^N v_i^\top P_i v_i,\qquad
  \ip{u}{v}_P := \sum_{i=1}^N u_i^\top P_i v_i.
\]
Allowing arbitrary $P_i$ captures block preconditioning and non-Euclidean geometries (e.g., Fisher metrics on simplices); the Euclidean case corresponds to $P_i=I_{d_i}$.
The matrices $P_i$ encode per-player geometry, while the player weights $w$ introduced below will rescale these blocks to form the SGN metric $M(w)$.

\paragraph{Mixed operator norms.}
For a linear map $A\in\R^{d_i\times d_j}$, the mixed operator norm induced by $(P_i,P_j)$ is
\[
  \norm{A}_{P_j\to P_i} \;:=\;
  \sup_{v\ne 0} \frac{\norm{A v}_{P_i}}{\norm{v}_{P_j}}.
\]
Equivalently,
\[
  \norm{A}_{P_j\to P_i} \;=\; \big\|P_i^{1/2} A P_j^{-1/2}\big\|_2,
\]
so this is just the operator $2$-norm in weighted coordinates.
These norms will be used to bound cross-player couplings via Jacobian and Hessian blocks.

\subsection{Block curvature and cross couplings}

We quantify player-wise curvature and cross-player couplings via parameters $\mu_i>0$ and $L_{ij}\ge 0$.

\begin{definition}[Block bounds in $P$-geometry]
\label{def:block-bounds}
We say that the pseudo-gradient $F$ obeys block bounds $(\mu,L)$ in $P$-geometry on a set $\X\subseteq\R^d$ if for all $x,y\in\X$ and all $i$,
\begin{equation}
  \ip{x_i-y_i}{\nabla_{x_i} f_i(x) - \nabla_{x_i} f_i(y)}_{P_i}
  \;\ge\;
  \mu_i \norm{x_i-y_i}_{P_i}^2
  \;-\;
  \sum_{j\ne i} L_{ij}\,\norm{x_i-y_i}_{P_i}\,\norm{x_j-y_j}_{P_j}.
  \label{eq:block-bounds}
\end{equation}
\end{definition}
These inequalities summarize player-wise curvature and cross-player couplings in the $P$-geometry: if each $f_i$ is twice continuously differentiable, with own-player Hessians satisfying $\nabla^2_{x_i x_i} f_i(x)\succeq \mu_i P_i$ and mixed partials bounded by $\|\nabla^2_{x_i x_j} f_i(x)\|_{P_j\to P_i}\le L_{ij}$ for all $x\in\X$, then~\eqref{eq:block-bounds} holds and $(\mu_i,L_{ij})$ can be read as curvature and coupling parameters.
In applications, we will use these bounds on a closed, convex region $\mathcal{R}\subseteq\X$ on which they have been certified, often a compact forward-invariant set.

\paragraph{Notation summary.}
We summarize a few recurring symbols that will be used throughout:
\begin{itemize}
  \item $P_i\succ 0$: per-player SPD metric; $P=\mathrm{diag}(P_i)$.
  \item $w\in\R_{++}^N$: player weights; $M(w)=\mathrm{diag}(w_i P_i)$ is the SGN block metric.
  \item $\X=\prod_i \X_i$: joint strategy space; $\mathcal{R}\subseteq\X$: certified region where structural bounds hold.
  \item $d=\sum_{i=1}^N d_i$: total dimension of the joint strategy profile.
  \item $F$: pseudo-gradient; $G=-F$.
  \item $(\mu_i,L_{ij})$: block curvature and coupling parameters in \cref{def:block-bounds}.
  \item $C(w,\alpha)$: SGN small-gain matrix from~\eqref{eq:Cwalpha}.
  \item $H(w)$: normalized gain matrix in \cref{subsec:normalized-gain}.
  \item $S_M(x)$: $M$-symmetric Jacobian part from~\eqref{eq:dsc}.
\end{itemize}

\section{Small-Gain Nash: Definition and Geometry}

\subsection{Weighted small-gain condition}

We now introduce the Small-Gain Nash condition.
Let $L\in\R^{N\times N}$ collect the cross-player coupling parameters from \cref{def:block-bounds}, with $L_{ii}=0$ and $L_{ij}\ge 0$ for $i\ne j$.
For a weight vector $w\in\R_{++}^N$ and margin parameter $\alpha\ge 0$, define the symmetric matrix $C(w,\alpha)\in\R^{N\times N}$ by
\begin{equation}
  C_{ii}(w,\alpha) := 2w_i(\mu_i-\alpha),\qquad
  C_{ij}(w,\alpha) := -\big(w_i L_{ij} + w_j L_{ji}\big)\quad (i\ne j).
  \label{eq:Cwalpha}
\end{equation}

\begin{definition}[Small-Gain Nash (SGN)]
\label{def:sgn}
Given block bounds $(\mu,L)$ as in \cref{def:block-bounds} and a weight vector $w\in\R_{++}^N$, we say that $\mathrm{SGN}(\mu,L;P,w)$ holds with margin $\alpha>0$ if $C(w,\alpha)\succ 0$.
The associated (weight-dependent) strong monotonicity margin is
\begin{equation}
  \alpha_*(w) := \sup\big\{\alpha\ge 0:\, C(w,\alpha)\succ 0\big\}.
  \label{eq:alpha-w}
\end{equation}
\end{definition}

Condition~\eqref{eq:alpha-w} is a weighted small-gain matrix inequality tying the
per-player curvature bounds $\mu_i$ and couplings $L_{ij}$ through
the SGN matrix $C(w,\alpha)$.
In addition to this exact spectral condition $C(w,\alpha) \succ 0$,
one can certify SGN via simpler Gershgorin-style diagonal-dominance
tests.
In particular, by Gershgorin’s theorem applied to $C(w,0)$ and
shifting by $\alpha I$, we obtain the conservative lower bound
\[
  \alpha_*(w)
  \;\ge\;
  \min_i\Big(
    \mu_i - \frac{1}{2 w_i}
      \sum_{j\neq i} (w_i L_{ij} + w_j L_{ji})
  \Big),
\]
so any $\alpha$ not exceeding the right-hand side is admissible.
Intuitively, $C(w,\alpha)\succ 0$ enforces a weighted diagonal dominance condition:
for each player $i$, the reweighted own-player curvature $w_i\mu_i$ dominates the aggregate cross couplings $w_iL_{ij}+w_jL_{ji}$.

\subsection{Strong monotonicity in the SGN geometry}

Let $W:=\mathrm{diag}(w_1 I_{d_1},\dots,w_N I_{d_N})$ and define the block metric
\[
  M(w) := \mathrm{diag}(w_i P_i) = WP.
\]
We write $\norm{\cdot}_{M(w)}$ and $\ip{\cdot}{\cdot}_{M(w)}$ for the norm and inner product induced by $M(w)$.

\begin{lemma}[Strong monotonicity from SGN]
\label{lem:strong-mono}
Suppose the block bounds~\eqref{eq:block-bounds} hold on a set $\mathcal{R}\subseteq\X$ and the weighted SGN condition of \cref{def:sgn} is satisfied for some $w\in\R_{++}^N$ with margin $\alpha>0$, i.e., $C(w,\alpha)\succ 0$.
Then for all $x,y\in\mathcal{R}$,
\begin{equation}
  \ip{x-y}{F(x)-F(y)}_{M(w)}
  \;\ge\; \alpha\,\norm{x-y}_{M(w)}^2.
  \label{eq:strong-mono}
\end{equation}
In particular, $F$ is $\alpha$-strongly monotone in the $M(w)$-geometry.
The proof (\cref{app:proofs}) rewrites the expression
\[
  \ip{x-y}{F(x)-F(y)}_{M(w)} - \alpha\norm{x-y}_{M(w)}^2
\]
as the quadratic form
$\tfrac{1}{2} a^\top C(w,\alpha)a$ in the block distances $a_i:=\norm{x_i-y_i}_{P_i}$;
positive definiteness of $C(w,\alpha)$ then yields~\eqref{eq:strong-mono}.
\end{lemma}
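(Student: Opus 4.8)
The plan is to follow the roadmap indicated in the statement: expand the $M(w)$-inner product block-wise, apply the block bounds~\eqref{eq:block-bounds} player by player, and collect everything into a finite-dimensional quadratic form in the per-player distances $a_i:=\norm{x_i-y_i}_{P_i}$, whose nonnegativity follows from $C(w,\alpha)\succ 0$. First, since $M(w)=\mathrm{diag}(w_iP_i)$ is block-diagonal, for any $x,y\in\mathcal{R}$ one has $\ip{x-y}{F(x)-F(y)}_{M(w)} = \sum_i w_i\,\ip{x_i-y_i}{\nabla_{x_i}f_i(x)-\nabla_{x_i}f_i(y)}_{P_i}$ and $\norm{x-y}_{M(w)}^2 = \sum_i w_i a_i^2$.

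Next, because each $w_i>0$, I would multiply the block bound~\eqref{eq:block-bounds} for player $i$ by $w_i$ and sum over $i$ without reversing any inequality; using nonnegativity of the $a_i$ so that the coupling terms $-L_{ij}a_ia_j$ carry the correct sign, this yields
\[
  \ip{x-y}{F(x)-F(y)}_{M(w)} - \alpha\,\norm{x-y}_{M(w)}^2
  \;\ge\;
  \sum_i w_i(\mu_i-\alpha)\,a_i^2 \;-\; \sum_i\sum_{j\ne i} w_i L_{ij}\,a_i a_j .
\]

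The key step is then to recognize the right-hand side as $\tfrac12\,a^\top C(w,\alpha)\,a$ with $a=(a_1,\dots,a_N)^\top$. The diagonal terms match immediately, since $\tfrac12 C_{ii}(w,\alpha) = w_i(\mu_i-\alpha)$ by~\eqref{eq:Cwalpha}. For the cross terms, symmetry of $C$ means $\tfrac12\,a^\top C(w,\alpha)\,a$ contributes $\sum_{i<j} C_{ij}(w,\alpha)\,a_i a_j = -\sum_{i<j}(w_iL_{ij}+w_jL_{ji})\,a_i a_j$, and re-indexing the ordered double sum shows $\sum_{i<j}(w_iL_{ij}+w_jL_{ji})\,a_i a_j = \sum_{i\ne j} w_iL_{ij}\,a_i a_j$, matching the expression above. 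This symmetrization is the one place that needs a moment's care, because $L$ is not assumed symmetric and the definition of $C_{ij}$ deliberately folds $L_{ij}$ and $L_{ji}$ together; I expect this bookkeeping, rather than any genuine difficulty, to be the main thing to get right.

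Finally, $\mathrm{SGN}(\mu,L;P,w)$ with margin $\alpha$ is by \cref{def:sgn} the condition $C(w,\alpha)\succ 0$, so $a^\top C(w,\alpha)\,a\ge 0$ for every $a\in\R^N$, and hence $\ip{x-y}{F(x)-F(y)}_{M(w)} - \alpha\,\norm{x-y}_{M(w)}^2 \ge \tfrac12\,a^\top C(w,\alpha)\,a \ge 0$, which is~\eqref{eq:strong-mono}; this establishes $\alpha$-strong monotonicity of $F$ in the $M(w)$-geometry. The argument is self-contained given \cref{def:block-bounds}, \cref{def:sgn}, and the definition of $M(w)$, and needs no compactness or differentiability beyond what is already assumed; one could optionally retain the slack $\tfrac12\,\lambda_{\min}(C(w,\alpha))\,\norm{a}_2^2$ to read off a slightly larger effective margin, but that is not required for the stated inequality.
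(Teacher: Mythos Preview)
Your proof is correct and follows essentially the same route as the paper's own proof: expand the $M(w)$-inner product blockwise, weight and sum the block bounds, symmetrize the cross terms into $\tfrac12 a^\top C(w,\alpha)a$, and conclude from $C(w,\alpha)\succ 0$. The only cosmetic difference is that the paper first identifies the quadratic form with $C(w,0)$ and then subtracts $2\alpha\,\mathrm{diag}(w_i)$ to obtain $C(w,\alpha)$, whereas you subtract $\alpha\norm{x-y}_{M(w)}^2$ up front and work with $C(w,\alpha)$ directly.
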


\begin{corollary}[Unique VI and Nash solution]
\label{cor:unique-nash}
Under the conditions of \cref{lem:strong-mono}, if we take $\mathcal{R}=\X$ with $\X$ nonempty, closed, and convex, then the variational inequality $\mathrm{VI}(F,\X)$ has a unique solution $x^*$.
If each $\X_i$ is convex and $f_i(\cdot, x_{-i})$ is convex for all $i$, then $x^*$ is the unique Nash equilibrium.
This follows by transporting $\mathrm{VI}(F,\X)$ to an equivalent Euclidean VI under the change of variables induced by $M(w)$ and applying standard monotone VI theory; see \cref{app:proofs}.
\end{corollary}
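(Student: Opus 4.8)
The plan is to reduce the statement to the classical existence--uniqueness theorem for variational inequalities with continuous, strongly monotone maps, pushed through the linear change of variables attached to $M(w)$. Write $M:=M(w)$ and factor $M=M^{1/2}M^{1/2}$ with $M^{1/2}\succ 0$. Set $u:=M^{1/2}x$; then $\mathcal{U}:=M^{1/2}\X$ is again nonempty, closed, and convex, and define the transported operator $G(u):=M^{1/2}F(M^{-1/2}u)$. For $u=M^{1/2}x$ and $v=M^{1/2}y$ a one-line computation gives $\ip{G(u)-G(v)}{u-v}=\ip{F(x)-F(y)}{x-y}_{M}$ and $\norm{u-v}_2^2=\norm{x-y}_{M}^2$, so \cref{lem:strong-mono} with $\mathcal{R}=\X$ says precisely that $G$ is Euclidean $\alpha$-strongly monotone on $\mathcal{U}$; since each $f_i$ is $C^1$, the operators $F$ and hence $G$ are continuous.

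Next I would invoke standard monotone VI theory \citep{facchinei2003}: a $\mathrm{VI}$ for a continuous, strongly monotone operator over a nonempty closed convex set has a unique solution, with strong monotonicity supplying the coercivity needed even when $\X$, hence $\mathcal{U}$, is unbounded. Thus $\mathrm{VI}(G,\mathcal{U})$ has a unique solution $u^*$. Undoing the substitution, $x^*:=M^{-1/2}u^*$ satisfies $\ip{F(x^*)}{x-x^*}_{M}\ge 0$ for all $x\in\X$, i.e.\ it solves $\mathrm{VI}_{M}(F,\X)$; since $u\mapsto M^{-1/2}u$ is a bijection carrying solutions of $\mathrm{VI}(G,\mathcal{U})$ onto solutions of $\mathrm{VI}_{M}(F,\X)$, uniqueness transfers.

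It remains to pass from $\mathrm{VI}_{M}(F,\X)$ to the Euclidean $\mathrm{VI}(F,\X)$ and then to Nash. Because $\X=\prod_i\X_i$ and $M=\mathrm{diag}(w_iP_i)$ is block-diagonal, the $M$-VI decouples across players: $x^*$ solves $\mathrm{VI}_{M}(F,\X)$ iff for every $i$, $\ip{P_i\nabla_{x_i}f_i(x^*)}{x_i-x_i^*}\ge 0$ for all $x_i\in\X_i$ (using $w_i>0$ to drop the scalar). In the compatibility situations recorded in the Preliminaries --- unconstrained games, interior equilibria, or diagonal $M$ with box constraints --- the preconditioner $P_i$ does not alter the feasible-direction condition, so this is equivalent to $\ip{\nabla_{x_i}f_i(x^*)}{x_i-x_i^*}\ge 0$, i.e.\ to $x^*$ solving $\mathrm{VI}(F,\X)$; together with convexity of each $f_i(\cdot,x_{-i})$ and the product-set identification of VI solutions with Nash equilibria \citep[Section~1.4]{facchinei2003}, this shows $x^*$ is the unique Nash equilibrium.

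The algebra of the change of variables and the citation of the VI existence theorem are routine; the one genuinely delicate point is the last step, identifying $\mathrm{VI}_{M}(F,\X)$ with $\mathrm{VI}(F,\X)$ and with Nash. For a nontrivial preconditioner $P_i\neq I$ on a genuinely constrained block, $P_i$ can tilt the normal cone so that an $M$-VI solution fails to be a Euclidean VI solution; this is exactly why the statement is understood relative to the compatibility hypotheses of the Preliminaries (unconstrained, interior, or a metric adapted to the constraints), under which the two notions --- and the associated Nash equilibrium --- coincide.
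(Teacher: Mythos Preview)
Your proof is correct and follows the same route as the paper's: a linear change of variables via $M(w)^{1/2}$ reduces to a Euclidean strongly monotone VI on $M(w)^{1/2}\X$, to which standard theory (Facchinei--Pang, Theorem~2.3.3) applies, and the VI/Nash identification then comes from Section~1.4 of the same reference. The one visible difference is that the paper transports $F$ to $\hat F(z)=M(w)^{-1/2}F(M(w)^{-1/2}z)$, which it identifies directly with $\mathrm{VI}(F,\X)$, whereas your $G(u)=M(w)^{1/2}F(M(w)^{-1/2}u)$ lands in $\mathrm{VI}_{M(w)}(F,\X)$ and you then pass to $\mathrm{VI}(F,\X)$ explicitly via the block-diagonal/compatibility structure from the Preliminaries --- a step the paper's proof simply absorbs into its choice of $\hat F$ without spelling out.
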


Since $F$ is $\alpha$-strongly monotone in the $M(w)$-geometry, solutions of the continuous flow $\dot x=-F(x)$ satisfy $\norm{x(t)-x^*}_{M(w)}\le e^{-\alpha t}\,\norm{x(0)-x^*}_{M(w)}$; see, e.g., standard monotone operator and convex analysis treatments such as~\citep{boyd1994,bauschke2011}.

In addition to the exact SGN matrix condition $C(w,\alpha)\succ 0$, one can certify SGN via simple Gershgorin-style diagonal-dominance conditions on a normalized gain matrix; these are collected in \cref{subsec:normalized-gain}.

\subsection{Best SGN margin}
\label{subsec:best-sgn-margin}

For a fixed weight vector $w\in\R_{++}^N$ and region $\mathcal{R}\subseteq\X$ on which the block bounds and SGN parameters are valid, the SGN margin $\alpha_*(w)$ from~\eqref{eq:alpha-w} is equivalently the largest number for which
\[
  \ip{x-y}{F(x)-F(y)}_{M(w)} \;\ge\; \alpha\,\norm{x-y}_{M(w)}^2
  \qquad\forall x,y\in\mathcal{R}.
\]
We define the \emph{best diagonal SGN margin} on $\mathcal{R}$ by
\begin{equation}
  \alpha^*(F;\mathcal{R})
  := \sup_{w\in\R_{++}^N} \alpha_*(w)
  = \sup\bigl\{\alpha\ge 0:\,\exists\,w\succ 0\text{ with }C(w,\alpha)\succ 0\bigr\},
  \label{eq:alpha-star}
\end{equation}
which measures how much strong monotonicity can be recovered on $\mathcal{R}$ by optimally choosing a block-diagonal metric of the form $M(w)$.
This is the best strong monotonicity margin obtainable within this diagonal SGN family; in general, non-block metrics could yield larger margins, but we restrict to this structured class for tractability.

On a region $\mathcal{R}$ where the block bounds~\eqref{eq:block-bounds} hold, the SGN matrix $C(w,\alpha)$ encodes a block small-gain inequality: $C(w,\alpha)\succ 0$ is equivalent to strong monotonicity of $F$ with margin $\alpha$ in the metric $M(w)$ by \cref{lem:strong-mono}.
The normalized gain matrix $H(w)$ and Gershgorin condition of \cref{subsec:normalized-gain} provide a conservative but easy-to-check sufficient condition for $C(w,\alpha)\succ 0$ that depends only on $(\mu,L)$ and the weight ratios $w_i/w_j$.

\begin{remark}[Sufficiency and conservatism of SGN]
The SGN condition is a sufficient, not necessary, certificate of stability.
Even when the pseudo-gradient is strongly monotone (or the flow is contracting) in some SPD metric, there need not exist a weight vector $w$ such that the block-diagonal SGN matrix $C(w,\alpha)$ is positive definite. In such cases, more general Jacobian-based directional stability / DSC conditions can certify larger regions or larger margins than SGN. We briefly develop this refinement and its relation to SGN in \cref{app:proofs}.
The appeal of SGN is that it trades metric expressivity for tractability: it reduces the certificate to coarse block summaries $(\mu_i,L_{ij})$ and a structured block-diagonal metric $M(w)$, avoiding the need to work with the full Jacobian spectrum while incurring some conservatism.
In particular, any margin $\alpha$ certified by SGN guarantees that the DSC condition~\eqref{eq:dsc} holds in the same metric with at least margin $\alpha$ (see \cref{app:proofs}); DSC can then refine this lower bound locally, while SGN provides a tractable, block-structured way to obtain a computable certificate in the first place.
\end{remark}

\subsection{Generalization to non-Euclidean geometries}

The SGN condition in \cref{def:sgn} depends only on block curvature and coupling parameters $(\mu_i,L_{ij})$ measured in a chosen block norm.
These quantities are geometry-agnostic: they can be defined equally well for any choice of strictly convex potentials $\psi_i$ and associated local norms or Bregman divergences $D_{\psi_i}$ on each player's strategy space.
In particular, the small-gain construction extends to mirror geometries on simplices, with Bregman distances and Fisher metrics replacing Euclidean norms.
We formalize this mirror-SGN variant and its Lyapunov-based contraction guarantees in \cref{subsec:mirror-sgn} and Appendix~\ref{app:mirror-sgn}.

\section{Discrete-Time Dynamics: Euler and Runge-Kutta}

We now show that explicit discretizations of the pseudo-gradient flow $\dot x=-F(x)$ inherit contraction in a fixed SPD metric $M$ under suitable step-size conditions.
Throughout this section we fix a nonempty closed convex set $\mathcal{R}\subseteq\R^d$ (for example, $\mathcal{R}=\X$ or a compact forward-invariant region) on which $G=-F$ is Lipschitz in $\norm{\cdot}_{M}$ with constant $\beta>0$, i.e.,
\begin{equation}
  \norm{G(x)-G(y)}_{M} \;\le\; \beta\,\norm{x-y}_{M}
  \qquad\forall x,y\in\mathcal{R}.
  \label{eq:G-Lip}
\end{equation}
These discrete-time contraction results are standard for contractive flows under log-norm analysis and nonexpansive projections; we recall them here in a general metric $M$ and refer to, e.g.,~\citep{soderlind2006,hairer1993,butcher2003,gottlieb2001} for general background.
When applying them in the SGN geometry we simply take $M=M(w)$.

\begin{remark}[Strong monotonicity versus Lipschitzness]
If $F$ is $\alpha$-strongly monotone and $G=-F$ is $\beta$-Lipschitz in $\norm{\cdot}_{M}$ on a set, then necessarily $\beta\ge\alpha$.
In our step-size bounds, the pair $(\alpha,\beta)$ plays a CFL-type role: $\alpha$ sets the contraction strength of the flow in $\norm{\cdot}_M$ and $\beta$ the curvature/Lipschitz scale, which is why projected Euler and RK4 naturally use steps on the order of $\alpha/\beta^2$ and $1/\beta$, respectively (cf.\ \cref{thm:proj-euler,thm:rk4}).
\end{remark}

\subsection{Projected forward-Euler}

Let $\Pi_{\X}^{M}$ denote the metric projection onto a nonempty closed convex set $\X$ under the norm $\norm{\cdot}_{M}$.

\begin{lemma}[Nonexpansiveness of metric projection]
\label{lem:P-proj}
\[
  \norm{\Pi_{\X}^{M}(x) - \Pi_{\X}^{M}(y)}_{M}
  \;\le\; \norm{x-y}_{M}.
\]
\end{lemma}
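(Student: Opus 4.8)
The plan is to use the variational (obtuse-angle) characterization of the metric projection, applied on both sides, followed by Cauchy–Schwarz in the $M$-inner product. First I would record the standard facts: since $\X$ is nonempty, closed, and convex and $M\succ 0$, for every $x$ the map $z\mapsto\norm{z-x}_M^2$ is strongly convex and coercive on $\X$, so $\Pi_{\X}^{M}(x)$ exists and is unique, and first-order optimality on the convex set $\X$ gives
\[
  p=\Pi_{\X}^{M}(x)\ \Longleftrightarrow\ p\in\X\ \text{and}\ \ip{x-p}{z-p}_M\le 0\ \ \forall z\in\X.
\]

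Next, writing $p:=\Pi_{\X}^{M}(x)$ and $q:=\Pi_{\X}^{M}(y)$, I would apply this inequality twice — once for the pair $(x,p)$ with test point $z=q$, once for $(y,q)$ with test point $z=p$ — and add the two resulting inequalities. Regrouping the cross terms yields $\ip{(x-y)-(p-q)}{q-p}_M\le 0$, equivalently $\norm{p-q}_M^2\le\ip{x-y}{p-q}_M$. Applying Cauchy–Schwarz in $\ip{\cdot}{\cdot}_M$ to the right-hand side gives $\norm{p-q}_M^2\le\norm{x-y}_M\,\norm{p-q}_M$, and dividing by $\norm{p-q}_M$ (the claim being trivial when $p=q$) yields the assertion. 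Note this argument in fact establishes the stronger firm nonexpansiveness $\norm{p-q}_M^2\le\ip{x-y}{p-q}_M$, which we will not need downstream.

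An equivalent route, which I would at most mention, is to diagonalize the geometry: the substitution $u=M^{1/2}x$ turns $\norm{\cdot}_M$ into the Euclidean norm and carries $\X$ to the nonempty closed convex set $M^{1/2}\X$, under which $\Pi_{\X}^{M}$ becomes the ordinary Euclidean projection onto $M^{1/2}\X$; nonexpansiveness then follows from the classical Euclidean case. There is essentially no obstacle here; the only steps needing a line of care are the existence/uniqueness of the projection (which uses $M\succ 0$ together with $\X$ nonempty, closed, convex) and the degenerate case $\norm{p-q}_M=0$ in the final division.
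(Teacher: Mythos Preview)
Your argument is correct and is the standard proof via the obtuse-angle inequality plus Cauchy--Schwarz (indeed yielding firm nonexpansiveness). The paper does not actually supply a proof of this lemma; it is stated as a standard fact about metric projections onto closed convex sets, so there is nothing to compare against and your write-up would serve perfectly well as the omitted justification.
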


Consider the projected forward-Euler scheme
\[
  \mathcal{T}_\eta(x)
  := \Pi_{\X}^{M}\big(x + \eta G(x)\big),
  \qquad \eta>0.
\]

\begin{theorem}[Projected Euler contraction]
\label{thm:proj-euler}
Assume $F$ is $\alpha$-strongly monotone in $\norm{\cdot}_M$ on a nonempty convex set $\mathcal{R}$ (for example, as certified by \cref{lem:strong-mono} with $M=M(w)$) and that $G$ satisfies~\eqref{eq:G-Lip} on $\mathcal{R}$.
For step-sizes $\eta\in\big(0, \tfrac{2\alpha}{\beta^2}\big)$, the map $\mathcal{T}_\eta$ is a contraction in $\norm{\cdot}_{M}$ on $\X\cap\mathcal{R}$ with factor
\[
  \sqrt{1 - 2\alpha\eta + \beta^2\eta^2} \;<\; 1.
\]
\end{theorem}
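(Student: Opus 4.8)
The plan is a one-line reduction via the nonexpansiveness of the metric projection (\cref{lem:P-proj}), followed by the standard quadratic expansion that combines strong monotonicity with the Lipschitz bound. First I would fix $x,y\in\X\cap\mathcal{R}$ and write $\mathcal{T}_\eta(x)-\mathcal{T}_\eta(y) = \Pi_{\X}^{M}\big(x+\eta G(x)\big) - \Pi_{\X}^{M}\big(y+\eta G(y)\big)$; applying \cref{lem:P-proj} gives
\[
  \norm{\mathcal{T}_\eta(x)-\mathcal{T}_\eta(y)}_{M}^2
  \;\le\; \norm{(x-y) + \eta\big(G(x)-G(y)\big)}_{M}^2 .
\]

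Next I would expand the right-hand side in the $M$-inner product:
\[
  \norm{(x-y)+\eta(G(x)-G(y))}_{M}^2
  = \norm{x-y}_{M}^2 + 2\eta\,\ip{x-y}{G(x)-G(y)}_{M} + \eta^2\,\norm{G(x)-G(y)}_{M}^2 .
\]
Since $G=-F$, $\alpha$-strong monotonicity of $F$ in $\norm{\cdot}_{M}$ (the hypothesis, e.g.\ from \cref{lem:strong-mono} with $M=M(w)$) gives $\ip{x-y}{G(x)-G(y)}_{M} = -\ip{x-y}{F(x)-F(y)}_{M} \le -\alpha\norm{x-y}_{M}^2$, while~\eqref{eq:G-Lip} gives $\norm{G(x)-G(y)}_{M}^2 \le \beta^2\norm{x-y}_{M}^2$. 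Substituting these two bounds yields
\[
  \norm{\mathcal{T}_\eta(x)-\mathcal{T}_\eta(y)}_{M}^2
  \;\le\; \big(1 - 2\alpha\eta + \beta^2\eta^2\big)\,\norm{x-y}_{M}^2 .
\]

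Then I would check that the factor behaves as claimed. The quadratic $q(\eta):=\beta^2\eta^2 - 2\alpha\eta + 1$ has discriminant $4(\alpha^2-\beta^2)\le 0$ because $\beta\ge\alpha$ (this inequality always holds when both constants are attained; cf.\ the preceding remark), so $q(\eta)\ge 0$ for all $\eta$ and the square root is well-defined; moreover $q(\eta)<1 \iff \eta\,(2\alpha-\beta^2\eta)>0 \iff 0<\eta<2\alpha/\beta^2$, which is precisely the stated step-size window. Taking square roots gives the contraction factor $\sqrt{1-2\alpha\eta+\beta^2\eta^2}<1$, as desired.

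I do not expect a genuine obstacle; the calculation is routine. The only point needing care is domain bookkeeping: the estimate above holds for every pair $x,y\in\X\cap\mathcal{R}$, but to read $\mathcal{T}_\eta$ as a bona fide self-contraction of $\X\cap\mathcal{R}$ (and thus invoke Banach fixed-point) one additionally needs $\mathcal{T}_\eta(\X\cap\mathcal{R})\subseteq\X\cap\mathcal{R}$, which is automatic when $\mathcal{R}$ is forward-invariant under the scheme — the setting flagged in the section preamble. I would state the result as the contraction \emph{estimate} on $\X\cap\mathcal{R}$ and record this forward-invariance caveat in passing.
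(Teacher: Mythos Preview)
Your argument is correct and is exactly the classical projection-then-quadratic-expansion proof the paper has in mind: the paper does not spell out a proof of \cref{thm:proj-euler} but instead states it, invokes \cref{lem:P-proj}, and defers to standard references for contractive flows and monotone operators (see the remark immediately following the theorem). Your write-up, including the forward-invariance caveat, matches that intended route.
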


\begin{remark}[Relation to existing Euler contraction results]
Results of the form in \cref{thm:proj-euler} are classical in contraction theory and monotone operator theory: for a vector field that is $\alpha$-contracting and $\beta$-Lipschitz in a fixed metric, explicit Euler is contracting for step sizes proportional to $2\alpha/\beta^2$; see, e.g., contraction-based analyses of optimization flows in \citep{cisneros2021contraction} and the Euler discretization results in \citep{centorrino2024euler,gokhale2024proximal}.
\cref{thm:proj-euler} is a specialization of these general bounds to projected pseudo-gradient dynamics in the metric $M$, stated in the $(\alpha,\beta)$ notation used throughout this paper.
\end{remark}

\subsection{Explicit Runge-Kutta methods}

We now discretize the pseudo-gradient flow $\dot x=-F(x)$ with explicit Runge-Kutta methods.
Our focus is on the classical explicit Runge-Kutta 4 (RK4) method; using the logarithmic norm $\mu_M$ associated with a metric $M$ (defined via the $M$-symmetric Jacobian in \cref{app:proofs}), a standard log-norm argument for explicit RK schemes (see, e.g.,~\citep[Section~II.7]{hairer1993} and~\citep[Section~3]{soderlind2006}) yields the following contraction bound (\cref{thm:rk4}).
In practice one often applies \cref{thm:proj-euler,thm:rk4} on a forward-invariant region $\mathcal{S}$ on which a directional/DSC margin $\alpha$ and a Lipschitz constant $\beta$ have been certified in the metric $M$.
If $\mathcal{S}$ is also invariant under the discrete-time maps (e.g., by projection onto a convex subset $\X\cap\mathcal{S}$), then the contraction arguments below apply on $\mathcal{S}$ and all iterates remain in the certified region.

\subsubsection*{RK4 contraction}
\begin{theorem}[RK4 contraction under a log-norm step bound]
\label{thm:rk4}
Suppose $G=-F$ is $\beta$-Lipschitz in $\norm{\cdot}_M$ on a convex set $\mathcal{R}$ and the DSC condition~\eqref{eq:dsc} holds there with margin $\alpha>0$, i.e., $\mu_M(J_G(x))\le -\alpha$ for all $x\in\mathcal{R}$.
Then, by the general log-norm contractivity results for explicit Runge-Kutta methods (see, e.g.,~\citep[Section~II.7]{hairer1993} and~\citep[Section~3]{soderlind2006} applied to the classical RK4 stability function $R(z)=1+z+z^2/2+z^3/6+z^4/24$), there exist method-dependent constants $C_4>0$ and $c_4\in(0,1]$ such that for any step size
\begin{equation}
  0 < h \le \frac{C_4}{\beta},
  \label{eq:hbound}
\end{equation}
RK4 is a contraction in the $\norm{\cdot}_M$-norm:
\[
  \norm{\Phi_h(x) - \Phi_h(y)}_{M}
  \;\le\; \exp(-c_4\alpha h)\,\norm{x-y}_{M}
  \qquad \forall x,y\in\mathcal{R}.
\]
\end{theorem}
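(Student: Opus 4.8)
The plan is to reduce the nonlinear RK4 step map $\Phi_h$ to a matrix-valued transition operator built from Jacobians averaged along segments, to recognize that operator as a perturbation of the classical RK4 stability function $R(z)=1+z+z^2/2+z^3/6+z^4/24$, and then to invoke the standard logarithmic-norm contractivity estimate for explicit Runge--Kutta methods in the metric $\norm{\cdot}_M$. First I would linearize: the hypothesis involves $J_G$, so $G$ is $C^1$ on the convex set $\mathcal{R}$, and for $u,v\in\mathcal{R}$ the fundamental theorem of calculus gives $G(u)-G(v)=\bar J(u,v)(u-v)$ with $\bar J(u,v):=\int_0^1 J_G(v+t(u-v))\,dt$, the connecting segment lying in $\mathcal{R}$ by convexity. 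Since $A\mapsto\mu_M(A)$ is convex (a supremum of linear functionals over the $M$-unit sphere) and positively homogeneous, Jensen's inequality gives $\mu_M(\bar J(u,v))\le-\alpha$, and $\norm{\bar J(u,v)}_M\le\beta$ follows from~\eqref{eq:G-Lip}. This is the step that converts the DSC hypothesis~\eqref{eq:dsc} into a uniform dissipativity bound on every averaged Jacobian the method produces.

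Next I would unroll the four RK4 stages for the trajectories started at $x$ and at $y$. Writing $e:=x-y$, each stage difference equals some averaged Jacobian applied to the corresponding stage-argument difference, and propagating these identities through the RK4 recursion yields $\Phi_h(x)-\Phi_h(y)=\Psi_h(x,y)\,e$, where $\Psi_h$ is a fixed polynomial expression (determined by the RK4 nodes and weights) in $h$ and in the stage Jacobians $\bar J_1,\dots,\bar J_4$ encountered along the way. Two facts carry over from the previous step: every $\bar J_s$ satisfies $\mu_M(\bar J_s)\le-\alpha$ and $\norm{\bar J_s}_M\le\beta$; and if the $\bar J_s$ all coincided with a common matrix $A$, then $\Psi_h$ would be exactly $R(hA)$. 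So the whole claim reduces to the matrix-level bound $\norm{\Psi_h}_M\le\exp(-c_4\alpha h)$.

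For the last step I would invoke the standard estimate: by the explicit-RK logarithmic-norm contractivity theory (e.g.~\citep[Section~II.7]{hairer1993} and~\citep[Section~3]{soderlind2006}, specialized to the stability function $R$), there are constants $C_4>0$ and $c_4\in(0,1]$ depending only on the method such that, whenever every stage Jacobian obeys $\mu_M(\cdot)\le-\alpha$ and $\norm{\cdot}_M\le\beta$ and $h\beta\le C_4$, one has $\norm{\Psi_h}_M\le\exp(-c_4\alpha h)$; combined with $\Phi_h(x)-\Phi_h(y)=\Psi_h e$ this is the asserted contraction on $\mathcal{R}$. The mechanism is that $\norm{e^{h\bar J}}_M\le e^{h\mu_M(\bar J)}\le e^{-\alpha h}$ always, that $R$ matches $e^{z}$ through order four, and that the fixed threshold $h\beta\le C_4$ keeps the stage displacements $\mathcal{O}(h\beta)$ and all remainder terms inside the region where the RK4 stability function is non-expansive --- crucially, RK4's stability region reaches a genuine segment of the imaginary axis (there $|R(iy)|^2=1-y^6/72+\mathcal{O}(y^8)\le 1$), so skew-dominated couplings do not force the $\alpha/\beta^2$-type step restriction that explicit Euler needs, and $h\le C_4/\beta$ suffices. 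The main obstacle, and the part where the cited references do real work, is precisely this last item: because RK4 samples $G$ at stage points that differ between the two trajectories, $\Psi_h$ is not literally $R(hA)$ for a single matrix, so one must control the stage-mismatch corrections and, for possibly non-normal (but dissipative) stage Jacobians, extract the full contraction rate $e^{-c_4\alpha h}$ rather than mere non-expansiveness --- a uniform, stability-region argument rather than a perturbative-in-$h$ one.
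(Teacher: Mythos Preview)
The paper does not supply a separate proof of this theorem: the statement itself embeds the citations and asserts the existence of the constants $(C_4,c_4)$ as a direct consequence of the referenced log-norm/stability-function theory, with no proof environment anywhere in the text or appendix. Your sketch is therefore already more explicit than the paper's treatment, and it follows exactly the route the citations point to---mean-value linearization to averaged stage Jacobians (using convexity of $\mu_M$ to inherit $\mu_M(\bar J)\le-\alpha$ from the pointwise DSC bound), unrolling the four stages into a matrix polynomial $\Psi_h$ that collapses to $R(hA)$ when all stage Jacobians coincide, and then the log-norm/stability-region bound on $\|\Psi_h\|_M$ for $h\beta\le C_4$.

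Two caveats worth making explicit, neither of which the paper resolves either. First, the RK4 intermediate stage arguments $x+\tfrac{h}{2}k_1$, etc., need not lie in $\mathcal{R}$, so your averaged-Jacobian construction strictly requires the DSC and Lipschitz hypotheses on a slight enlargement of $\mathcal{R}$; the paper acknowledges this only in its remark on constrained RK4. Second, you correctly flag that the genuinely nontrivial step is bounding $\|\Psi_h\|_M$ when the stage Jacobians $\bar J_1,\dots,\bar J_4$ differ (so $\Psi_h\neq R(hA)$ for any single $A$); the references you and the paper cite are primarily linear stability-function analyses, and extracting the full nonlinear contraction factor $e^{-c_4\alpha h}$ with mismatched stages is where the real work hides. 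The paper is no more self-contained on this point than your proposal, so as a proof sketch yours is faithful to---and more transparent than---what the paper offers.
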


\begin{remark}[Numerical values for RK4 constants]
For the classical RK4 tableau, one can numerically verify conservative choices such as $C_4\approx 2.5$ and $c_4\approx \tfrac{1}{2}$, consistent with the real-axis stability interval $[-\gamma_4,0]$ where $\gamma_4\approx 2.785$; see, e.g.,~\citep{hairer1993,butcher2003}.
In the theory we only require the existence of such method-dependent constants; our experiments use numerically validated values.
\end{remark}

\begin{corollary}[Local certified balls and forward invariance]
\label{cor:ball-forward}
Let $x^*$ be the unique VI/Nash solution from \cref{cor:unique-nash}, and let $\mathcal{R}\subseteq\X$ be a region on which the assumptions of \cref{thm:proj-euler} or \cref{thm:rk4} hold.
Suppose the corresponding one-step map $\mathcal{T}$ (projected Euler or RK4) is a contraction on $\mathcal{R}$ in $\norm{\cdot}_{M(w)}$ with factor $q\in(0,1)$.
For any radius $r>0$ such that the closed ball
$B_r := \{x:\,\norm{x-x^*}_{M(w)}\le r\}$ satisfies $B_r\subseteq\mathcal{R}$, the ball $B_r$ is forward-invariant under $\mathcal{T}$ and, for any $x^0\in B_r$,
\[
  \norm{x^k - x^*}_{M(w)} \;\le\; q^k\,\norm{x^0-x^*}_{M(w)}
  \qquad\text{for all }k\ge 0.
\]
\end{corollary}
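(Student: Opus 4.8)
The plan is a textbook Banach-style fixed-point argument; the only step requiring care is identifying $x^*$ as a fixed point of the one-step map $\mathcal{T}$, after which forward invariance and the geometric rate follow by induction on $k$ using the assumed contraction factor $q$.

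First I would record the trivial but necessary inclusion $x^*\in\mathcal{R}$: since $\norm{x^*-x^*}_{M(w)}=0\le r$ we have $x^*\in B_r\subseteq\mathcal{R}$, and by \cref{cor:unique-nash} also $x^*\in\X$, so $x^*$ lies in the region on which the contraction hypothesis is assumed. Next I would show $\mathcal{T}(x^*)=x^*$. For projected Euler, $\mathcal{T}_\eta(x^*)=\Pi_{\X}^{M(w)}\!\big(x^*-\eta F(x^*)\big)$; the variational inequality characterizing the $M(w)$-metric projection, namely $z=\Pi_{\X}^{M(w)}(u)$ exactly when $\ip{u-z}{x-z}_{M(w)}\le 0$ for all $x\in\X$, shows this fixed-point equation is equivalent to $\ip{F(x^*)}{x-x^*}_{M(w)}\ge 0$ for all $x\in\X$, i.e.\ to $x^*$ solving $\mathrm{VI}_{M(w)}(F,\X)$; on the certified regimes considered in this paper (unconstrained games, interior equilibria, or $M(w)$ compatible with the box/product constraints) this is the same point as the Euclidean VI/Nash solution of \cref{cor:unique-nash}, and in the interior/unconstrained case it reduces simply to $F(x^*)=0$. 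For RK4, every internal stage of $\Phi_h$ evaluates $G=-F$ at $x^*$, where $G(x^*)=0$, so all stage increments vanish and $\Phi_h(x^*)=x^*$ (a projection step, if present, fixes $x^*$ because $x^*\in\X$). This fixed-point identification --- and, in particular, the alignment between ``the VI/Nash solution from \cref{cor:unique-nash}'' and a fixed point of $\mathcal{T}$ in the metric $M(w)$ --- is the one place the argument is not pure bookkeeping, so I would state the (mild) compatibility assumption on $M(w)$ and the constraints explicitly there.

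With $\mathcal{T}(x^*)=x^*$ and $x^*\in\mathcal{R}$ in hand, the rest is immediate. For forward invariance, take $x\in B_r$; then $x\in\mathcal{R}$ and the contraction hypothesis gives $\norm{\mathcal{T}(x)-x^*}_{M(w)}=\norm{\mathcal{T}(x)-\mathcal{T}(x^*)}_{M(w)}\le q\,\norm{x-x^*}_{M(w)}\le q r<r$, so $\mathcal{T}(x)\in B_r\subseteq\mathcal{R}$; hence $B_r$ is $\mathcal{T}$-invariant and, by induction on $k$, the whole orbit $x^0,x^1,\dots$ of any $x^0\in B_r$ stays in $B_r$, so the contraction estimate is licensed at every step. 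Finally, chaining that estimate along the orbit, $\norm{x^k-x^*}_{M(w)}\le q\,\norm{x^{k-1}-x^*}_{M(w)}\le\cdots\le q^k\,\norm{x^0-x^*}_{M(w)}$ for all $k\ge 0$, which is the claimed rate. The main (and only real) obstacle is the constrained fixed-point step; the forward-invariance and rate parts are routine once $\mathcal{T}(x^*)=x^*$ is established and $B_r\subseteq\mathcal{R}$ is used to keep the iterates inside the certified region.
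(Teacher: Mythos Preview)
Your argument is correct and is exactly the standard Banach-style reasoning the paper has in mind; in fact the paper states this corollary without proof, treating forward invariance and the geometric rate as immediate once contraction of $\mathcal{T}$ on $\mathcal{R}$ is known. Your treatment is actually more careful than the paper's: you explicitly isolate and justify the fixed-point identification $\mathcal{T}(x^*)=x^*$ (via the projection/VI characterization for projected Euler and via $G(x^*)=0$ for RK4), and you correctly flag that this step relies on the compatibility between the Euclidean VI solution of \cref{cor:unique-nash} and $\mathrm{VI}_{M(w)}(F,\X)$, which the paper only discusses informally in the preliminaries.
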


\subsection{Projected integrators in the SGN metric}
\label{subsec:integrator-cookbook}

\noindent
For contractive flows, standard results in numerical analysis state that explicit integrators such as projected Euler and RK4 remain stable provided the step size satisfies a bound of the form $h\beta \le C$ for a method-dependent constant $C$; in the SGN metric this yields the step-size ranges summarized in the following CFL-style policy.
Here $\alpha$ is any certified contraction margin (e.g., from SGN or the directional refinement of \cref{thm:dsc}) and $\beta$ is a Lipschitz bound for $G=-F$ on the region of interest in the $M(w)$-geometry; $C_4$ and $c_4$ are RK4-specific constants from \cref{thm:rk4} (in our experiments we used numerically validated choices such as $C_4\approx 2.5$, $c_4\approx \tfrac{1}{2}$).

\begin{center}
\fbox{\parbox{0.95\linewidth}{\textbf{Projected integrators in the SGN metric $\norm{\cdot}_{M(w)}$.}
All statements hold on any region $\mathcal{S}$ on which the corresponding SGN/DSC and Lipschitz bounds are certified and which is forward-invariant for the method (or is enforced by projection).
\begin{align*}
  &\text{Projected Euler: } &&x^{k+1} = \Pi_{\X}^{M(w)}\big(x^k + \eta\,G(x^k)\big),\quad 0<\eta<\tfrac{2\alpha}{\beta^2},\\
  &&&\hspace{4.0em}\text{one-step factor } \sqrt{1-2\alpha\eta+\beta^2\eta^2}\text{ (\cref{thm:proj-euler}).}\\[0.3em]
  &\text{RK4: } &&x^{k+1} = \Pi_{\X}^{M(w)}\!\big(\Phi_h(x^k)\big),\quad
      0<h\le \tfrac{C_4}{\beta},\\
  &&&\hspace{4.0em}\text{one-step factor } \exp(-c_4\alpha h)\text{ (\cref{thm:rk4}).}
\end{align*}
\vspace{-0.3em}}}
\end{center}

\begin{remark}[Projection on product sets]
For product strategy sets $\X=\prod \X_i$, the metric projection $\Pi_{\X}^{M(w)}$ decouples into independent per-player projections $\Pi_{\X_i}^{P_i}$ that are invariant under the scalar weighting $w$.
Thus, for the standard case of product constraints, the SGN update is simply a projected gradient step with a scalar step size $\eta$, and the weights $w$ serve purely as an analysis instrument to certify the contraction of this scalar-step dynamics where Euclidean analysis would fail.
\end{remark}

\begin{remark}[RK4 on constrained sets]
For constrained domains $\X$, we apply the metric projection $\Pi_{\X}^{M(w)}$ after each RK4 step, i.e., we use the one-step map $\mathcal{T}_h(x)=\Pi_{\X}^{M(w)}\!\big(\Phi_h(x)\big)$.
Here $\Phi_h$ is the unconstrained RK4 update for $\dot x = G(x)$; its intermediate stages require that $G$ be well-defined on a neighbourhood of $\X$.
Nonexpansiveness of $\Pi_{\X}^{M(w)}$ in $\norm{\cdot}_{M(w)}$ ensures that the contraction factor $\exp(-c_4\alpha h)$ from \cref{thm:rk4} is preserved for $\mathcal{T}_h$.
Projection generally reduces the local order of accuracy at the boundary (from fourth to second order), but our focus is on geometric convergence to an equilibrium rather than high-order trajectory accuracy, so this loss of order is immaterial for the present purposes.
\end{remark}

\section{Canonical Example: Non-Monotone Yet SGN-Amenable}

We illustrate the SGN geometry on a simple two-player quadratic game that is non-monotone in Euclidean geometry but admits a stabilizing weighted metric.

\subsection{Two-player SGN region and timescale band}

In the two-player case, the SGN condition admits a closed-form characterization: for each margin $\alpha<\min\{\mu_1,\mu_2\}$ there exist explicit roots $r_{-}(\alpha)<r_{+}(\alpha)$ such that $\mathrm{SGN}(\mu,L;P,w)$ with margin $\alpha$ holds if and only if the timescale ratio $r=w_2/w_1$ lies in the band $r\in(r_{-}(\alpha),r_{+}(\alpha))$; see Proposition~\ref{prop:two-player-band} in \cref{app:two-player-band} for the algebraic derivation.

\begin{remark}[Finite vs.\ asymptotic timescales and TTUR]
The two-player timescale band has a natural connection to two-timescale update rules (TTUR) for stochastic approximation, such as the TTUR analysis of GAN training~\citep{heusel2017}.
Classical TTUR results guarantee convergence by sending the timescale ratio $r_k=\eta_{2,k}/\eta_{1,k}\to\infty$, corresponding to infinite timescale separation between the ``slow'' and ``fast'' players.
Our two-player SGN band instead identifies a finite range of geometric ratios $r=w_2/w_1$ for which the existing scalar-step dynamics is contractive in the SGN metric.
In this sense SGN provides a non-asymptotic complement to TTUR: it does not alter the update rule, but it certifies when a finite timescale ratio suffices, giving a ``license to speed'' in games that Euclidean analysis would incorrectly classify as unstable.
Operationally, the SGN band for $r=w_2/w_1$ identifies the range of metric weights under which the scalar-step pseudo-gradient flow is contractive.
\end{remark}

\begin{remark}[High-dimensional timescale cones]
In the $N=2$ case, the SGN condition reduces to a scalar interval for $r=w_2/w_1$. For $N>2$, the linearity of $C(w,\alpha)$ in $w$ implies that the set of valid weights $\mathcal{W}=\{w\in\R_{++}^N:\ C(w,\alpha)\succ 0\}$ forms a convex cone in the positive orthant. While this does not admit a simple 1D interval representation, it geometrically generalizes the band concept: the ``safe'' timescales form a connected convex region where player update rates are balanced relative to the aggregate coupling structure. Operationally, identifying this cone (or optimizing the margin within it) remains efficient as a Generalized Eigenvalue Problem (GEVP).
\end{remark}

Consider now the quadratic example
\[
  f_1(x_1,x_2) = \tfrac{\mu_1}{2} x_1^2 + a\, x_1 x_2,\qquad
  f_2(x_1,x_2) = \tfrac{\mu_2}{2} x_2^2 + b\, x_1 x_2,
\]
so the pseudo-gradient is
\[
  F(x) = \begin{bmatrix} \mu_1 x_1 + a x_2\\ \mu_2 x_2 + b x_1 \end{bmatrix},
  \qquad
  J = \nabla F = \begin{bmatrix} \mu_1 & a \\ b & \mu_2 \end{bmatrix}.
\]

\paragraph{Euclidean non-monotonicity.}
In Euclidean geometry, $F$ is monotone iff the symmetric part $J_s=(J+J^\top)/2$ is positive semidefinite, i.e.,
$\mu_1\mu_2 \ge ((a+b)/2)^2$.
Choosing, for instance, $\mu_1=\mu_2=1$, $a=10$, $b=0.05$, we have $((a+b)/2)^2\approx 25.25>1=\mu_1\mu_2$, so $J_s$ is indefinite and Euclidean monotonicity fails.
Simultaneous gradient descent in this Euclidean geometry exhibits pronounced transient growth and oscillations, and diverges once the step-size exceeds the usual stability threshold.

\paragraph{SGN geometry.}
Take $P_1=P_2=1$, so $P=I_2$.
Block bounds are given by $\mu_1,\mu_2$ and
\[
  L_{12}=|a|=10,\qquad
  L_{21}=|b|=0.05.
\]
The SGN matrix $C(w,\alpha)$ depends on weights $w=(w_1,w_2)$ and margin $\alpha$.
In this $2\times 2$ case we have
\[
  C(w,\alpha)
  = \begin{bmatrix}
      2w_1(\mu_1-\alpha) & -(w_1L_{12}+w_2L_{21})\\
      -(w_1L_{12}+w_2L_{21}) & 2w_2(\mu_2-\alpha)
    \end{bmatrix}.
\]
Positive definiteness is equivalent to $w_i(\mu_i-\alpha)>0$ and
\[
  4w_1w_2(\mu_1-\alpha)(\mu_2-\alpha) - (w_1L_{12}+w_2L_{21})^2 > 0.
\]
Choosing weights that balance the off-diagonal terms, e.g.
$
  w_1/w_2 = L_{21}/L_{12} = 0.05/10 = 1/200,
$
so $r=w_2/w_1=200$,
gives $w_1L_{12}=w_2L_{21}$ and simplifies the determinant condition to
\[
  (\mu_1-\alpha)(\mu_2-\alpha) > L_{12}L_{21}.
\]
For the numerical values above, this reduces to $(1-\alpha)^2>0.5$, so any $\alpha\in(0,\,1-\sqrt{0.5})$ is admissible; thus SGN holds with a positive margin for the balanced weights.
In the corresponding $M(w)$-geometry, the vector field is less skewed and trajectories straighten, revealing a regime where SGN holds and contraction can be certified.
This illustrates how SGN leverages a weighted geometry to compensate for strong cross couplings; \cref{fig:quadratic-phase,fig:quadratic-pseudospectrum} visualize the resulting flow and pseudospectra in Euclidean vs SGN geometries.

\begin{figure}[htbp]
  \centering
  \includegraphics[width=0.6\linewidth]{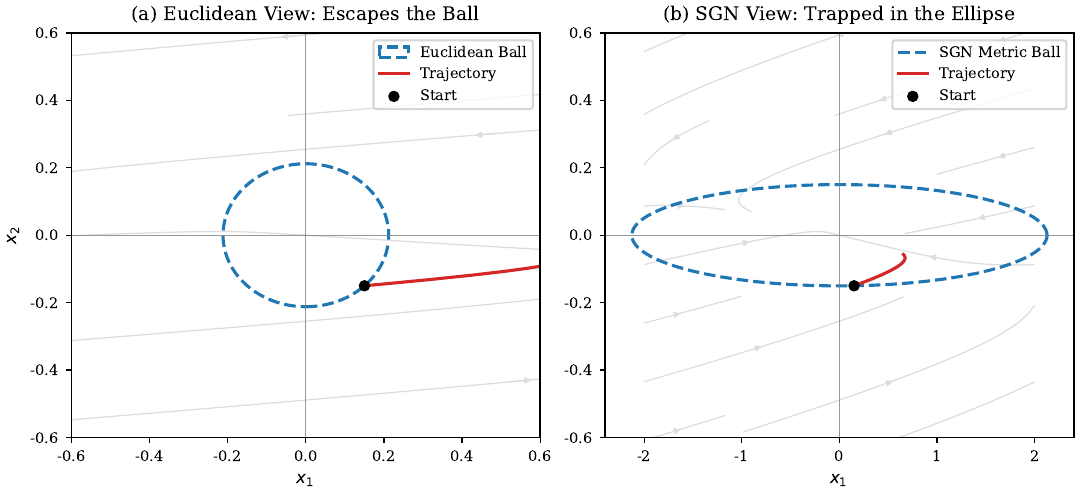}
  \caption{The ``Escape vs. Trap'' visualization of the quadratic example. Both panels display the \emph{same} standard gradient flow trajectory (red). Left: In the Euclidean view, the trajectory transiently escapes the unit circle (dashed blue) outwards, indicating a failure of monotonicity ($\frac{d}{dt}\|x\|^2 > 0$). Right: In the SGN-metric view (displayed as an ellipse in the original coordinates), the same trajectory strictly enters the sublevel set, visually confirming the contraction certificate provided by the weighted metric.}
  \label{fig:quadratic-phase}
\end{figure}

\begin{figure}[htbp]
  \centering
  \includegraphics[width=0.6\linewidth]{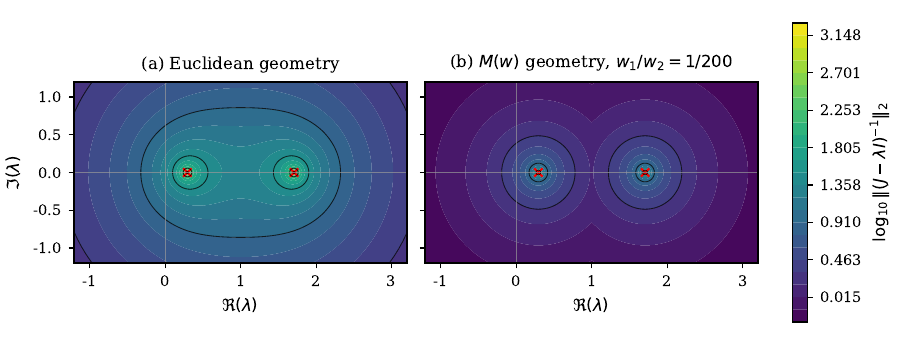}
  \caption{Pseudospectra of the quadratic example: in Euclidean geometry the pseudospectral bulge extends far beyond the eigenvalues, indicating strong nonnormal amplification, while under the SGN metric the pseudospectrum is much tighter, quantifying how the SGN geometry reduces nonnormal amplification.}
  \label{fig:quadratic-pseudospectrum}
\end{figure}

\paragraph{Discrete-time behavior.}
For the linear system $G(x)=-Jx$, a Lipschitz bound in any norm is given by an operator norm of $J$.
For example, in Euclidean geometry one has
$\beta\le \norm{J}_2 \le \norm{J}_F = \sqrt{\mu_1^2 + \mu_2^2 + a^2 + b^2}\approx 10.1$.
Given a certified monotonicity or DSC margin $\alpha>0$ in a metric $M(w)$ (e.g., on a region where the SGN condition holds), the RK4 policy of \cref{thm:rk4} yields a concrete safe step
$h\le C_4/\beta$.
Within this step range, the one-step map is a contraction in $\norm{\cdot}_{M(w)}$, providing a priori stability guarantee with per-step factor $\exp(-c_4\alpha h)$.
Numerically, for the balanced SGN metric and a step satisfying this bound, RK4 converges linearly to the origin, whereas using the same step in the Euclidean geometry leads to large oscillations or divergence, mirroring the qualitative behavior in \cref{fig:quadratic-phase}.

\FloatBarrier

\subsection{High-dimensional LQ validation}
\label{subsec:lq-validation}

The two-dimensional quadratic example above provides a visual illustration of the SGN geometry.
We now instantiate the same ideas on a higher-dimensional linear-quadratic (LQ) game where the block bounds $(\mu,L)$ used by SGN are \emph{exact}, and we compare the resulting SGN margin, Lipschitz constant, and discrete-time step-size bounds to the true behavior.

\paragraph{Canonical $64$-dimensional LQ game.}
For each coupling strength $\lambda\ge 0$, we consider a two-player game on $\R^{64}$ with blocks $d_1=d_2=32$ and costs
\[
  f_1(x_1,x_2) = \tfrac12 x_1^\top Q_1 x_1 + x_1^\top A_{12}(\lambda) x_2,\qquad
  f_2(x_1,x_2) = \tfrac12 x_2^\top Q_2 x_2 + x_2^\top A_{21}(\lambda) x_1,
\]
so that $F(x)=H(\lambda)x$ with
\[
  H(\lambda)
  =
  \begin{bmatrix}
    Q_1 & A_{12}(\lambda)\\
    A_{21}(\lambda) & Q_2
  \end{bmatrix}.
\]
We fix isotropic curvature $Q_i=\mu_0 I_{32}$ with $\mu_0=1$, so the block curvature parameters are exactly $\mu_1=\mu_2=\mu_0$.
For the cross couplings we choose an orthogonal base matrix $R\in\R^{32\times 32}$ with $\|R\|_2=1$ and set
\[
  A_{12}(\lambda) = \lambda a R,\qquad
  A_{21}(\lambda) = \lambda b R^\top,
\]
with $a=10$ and $b=0.05$ as in the scalar example.
For this canonical game the SGN block parameters
\[
  L_{12}(\lambda) = \|A_{12}(\lambda)\|_2 = \lambda a,\qquad
  L_{21}(\lambda) = \|A_{21}(\lambda)\|_2 = \lambda b
\]
are therefore equal to the true operator norms of the cross blocks, so the SGN matrix $C(w,\alpha)$ uses \emph{tight} curvature and coupling bounds.

\paragraph{Margins and step-sizes across $\lambda$.}
We equip $\R^{64}$ with the block-diagonal metric $M(w)=\mathrm{diag}(w_1 I_{32},w_2 I_{32})$ and, for a grid of couplings $\lambda\in[0,2.5]$, compute:
the Euclidean symmetric margin
\[
  \gamma_{\mathrm{euc}}(\lambda)
  := \lambda_{\min}\!\Big(\tfrac12\big(H(\lambda)+H(\lambda)^\top\big)\Big),
\]
the SGN margin $\alpha_*(w,\lambda)$ from the $2\times 2$ small-gain matrix $C(w,0)$, and the true metric margin
\[
  \alpha_{\mathrm{true}}(w,\lambda)
  := \lambda_{\min}\big(M(w)^{-1/2} J_M M(w)^{-1/2}\big),
  \quad
  J_M := \tfrac12\big(M(w)H(\lambda)+H(\lambda)^\top M(w)\big).
\]
We also compute the Lipschitz constant
\[
  \beta(w,\lambda) = \big\|M(w)^{1/2} H(\lambda) M(w)^{-1/2}\big\|_2
\]
of $G=-F$ in the $M(w)$-norm.
For consistency with the scalar example we take the balanced weights so that $w_2/w_1=L_{12}/L_{21}=a/b=200$.

For this canonical game the SGN bound and the true metric margin coincide over the entire regime where SGN certifies strong monotonicity: numerically we find $\alpha_*(w,\lambda)=\alpha_{\mathrm{true}}(w,\lambda)$ wherever $\alpha_*(w,\lambda)>0$.
This equality is specific to the structured choice of couplings; in more generic LQ games SGN provides a conservative lower bound on the true margin (see Appendix~\ref{app:lq-exp}).
\Cref{fig:lq-margins} summarizes how the Euclidean and SGN/true metric margins vary with the coupling $\lambda$, highlighting the SGN-only regime; see its caption for the detailed comparison.

\begin{figure}[htbp]
  \centering
  \includegraphics[width=0.6\linewidth]{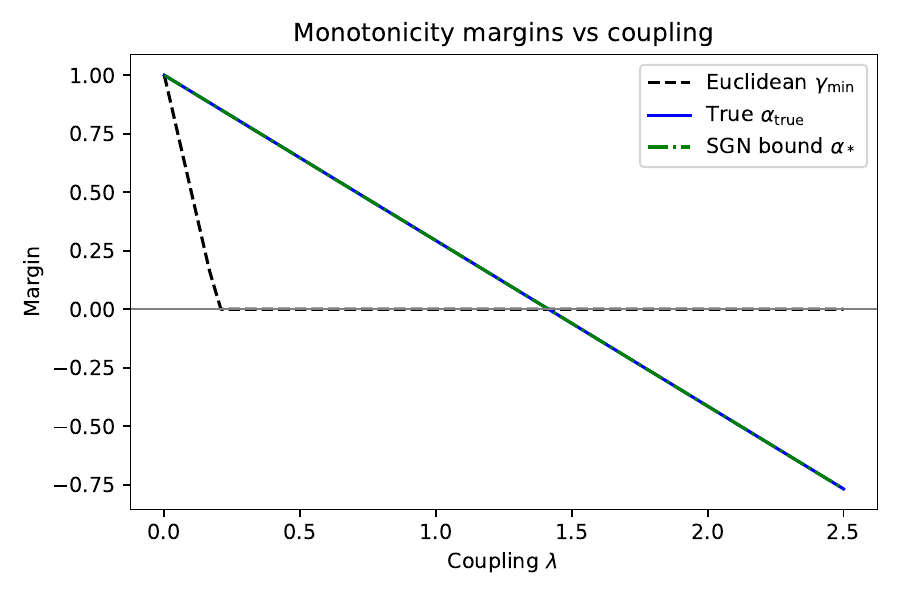}
  \caption{Monotonicity margins for the canonical $64$-dimensional LQ game: the Euclidean symmetric margin (dashed) becomes negative as the coupling~$\lambda$ increases, while the true metric margin (solid) and the SGN bound (dash-dotted) remain positive over an SGN-only regime.}
  \label{fig:lq-margins}
\end{figure}

\paragraph{Representative coupling and numerical values.}
We concentrate on the coupling value $\lambda_\star=1$, which lies inside this SGN-only regime.
At $\lambda_\star$ the metrics and bounds take particularly simple values for the balanced weights:
\begin{equation*}
  \gamma_{\mathrm{euc}}(\lambda_\star) \approx -4.03,\qquad
  \alpha_*(w,\lambda_\star) = \alpha_{\mathrm{true}}(w,\lambda_\star) \approx 0.293,\qquad
  \beta(w,\lambda_\star) \approx 1.71.
\end{equation*}
The Euler and RK4 SGN step-size bounds from \cref{thm:proj-euler,thm:rk4} are then
\[
  0 < \eta < \eta_{\mathrm{SGN}}(\lambda_\star)
  := \frac{2\alpha_*(w,\lambda_\star)}{\beta(w,\lambda_\star)^2}
  \approx 0.20,\qquad
  0 < h \le h_{\mathrm{SGN}}(\lambda_\star)
  := \frac{C_4}{\beta(w,\lambda_\star)}
  \approx 1.46,
\]
where we use a conservative choice $C_4\approx 2.5$ for the RK4 log-norm constant.
\Cref{tab:lq-representative} summarizes these quantities.

\begin{table}[t]
  \centering
  \caption{Euclidean and SGN margins and SGN step-size bounds for the canonical $64$-dimensional LQ game at the representative coupling $\lambda_\star=1$ with balanced weights $w_1/w_2=b/a$.
  The SGN margin coincides with the true metric margin for this structured game, while the Euclidean symmetric part has a large negative eigenvalue.}
  \label{tab:lq-representative}
  \begin{tabular}{lcc}
    \toprule
    Quantity & Expression & Value at $\lambda_\star$ \\
    \midrule
    Euclidean margin & $\gamma_{\mathrm{euc}}(\lambda_\star)$ & $\approx -4.03$ \\
    SGN margin & $\alpha_*(w,\lambda_\star)$ & $\approx 0.293$ \\
    True metric margin & $\alpha_{\mathrm{true}}(w,\lambda_\star)$ & $\approx 0.293$ \\
    Lipschitz constant & $\beta(w,\lambda_\star)$ & $\approx 1.71$ \\
    Euler SGN bound & $\eta_{\mathrm{SGN}}(\lambda_\star)$ & $\approx 0.20$ \\
    RK4 SGN bound & $h_{\mathrm{SGN}}(\lambda_\star)$ & $\approx 1.46$ \\
    \bottomrule
  \end{tabular}
\end{table}

\paragraph{Timescale band and weight design.}
The two-player SGN timescale band of \cref{prop:two-player-band} predicts a range of weight ratios $r=w_2/w_1$ for which $C(w,0)\succ 0$.
In the canonical game this band can be compared directly to the true metric margin $\alpha_{\mathrm{true}}(w,\lambda_\star)$, since the block bounds are tight.
\Cref{fig:lq-timescale} compares the analytic band with the region where $\alpha_{\mathrm{true}}(w,\lambda_\star)>0$, confirming that the small-gain condition is not an artifact of low dimension (see the caption for quantitative detail).

\begin{figure}[htbp]
  \centering
  \includegraphics[width=0.6\linewidth]{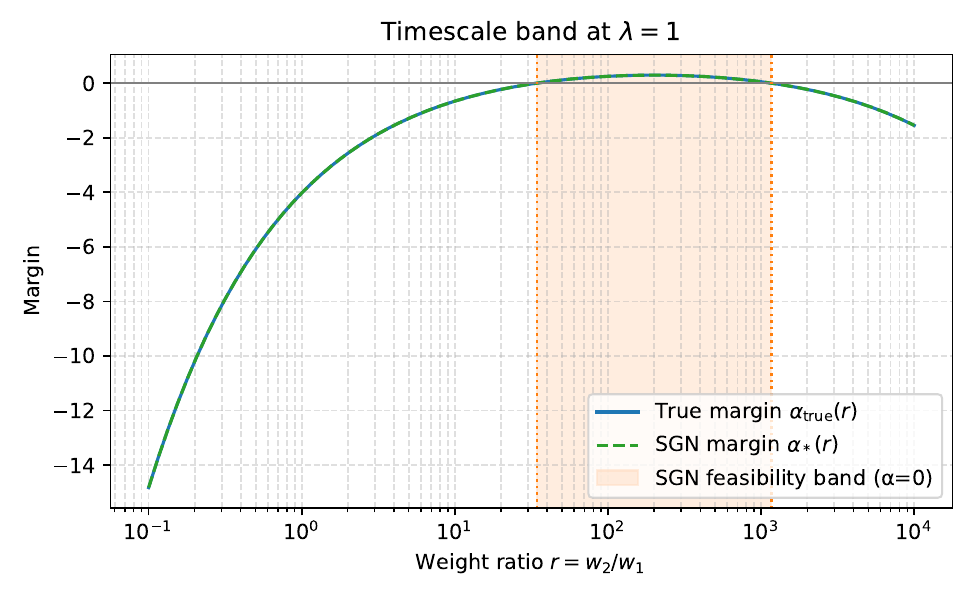}
  \caption{Timescale band for the canonical $64$-dimensional LQ game at $\lambda=1$: SGN and true metric margins across weight ratios $r=w_2/w_1$, together with the analytic SGN feasibility band, which closely matches the region where the true metric margin is positive.}
  \label{fig:lq-timescale}
\end{figure}

We note that linear-quadratic dynamics of the form studied here also arise in the NTK limit of wide neural networks, where training can be modeled as gradient flow in function space with a fixed kernel operator~\citep{jacot2018ntk}.
In that regime, SGN applies to the linearized game defined by the NTK and the loss Hessian, but we leave a detailed NTK-based treatment to future work.

\FloatBarrier

\paragraph{Discrete-time phase diagrams.}
Using the exact Jacobian $H(\lambda)$ and metric $M(w_{\mathrm{bal}})$ we compute, for each $\lambda$ on a grid, the SGN Euler and RK4 step-size bounds and the true stability thresholds obtained from spectral radii of the one-step matrices.
The resulting phase diagrams, shown in \cref{fig:lq-phase}, lie strictly inside the true stability region but remain within a modest factor of the actual thresholds, validating the contraction step-size policies in a high-dimensional setting.
In the coupling-stepsize $(\lambda,h)$ plane, the SGN step-size policies (orange dashed curves) stay strictly within the true stability boundaries (red solid curves) across all couplings, yet provide practical, nonvanishing steps that closely track the empirical stability frontier.

\begin{figure}[htbp]
  \centering
  \includegraphics[width=0.48\linewidth]{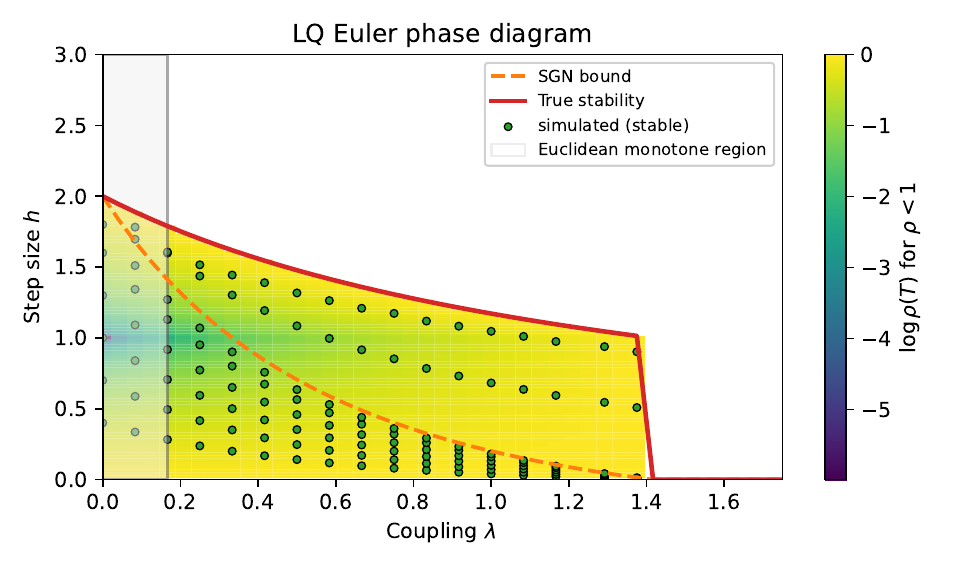}\hfill
  \includegraphics[width=0.48\linewidth]{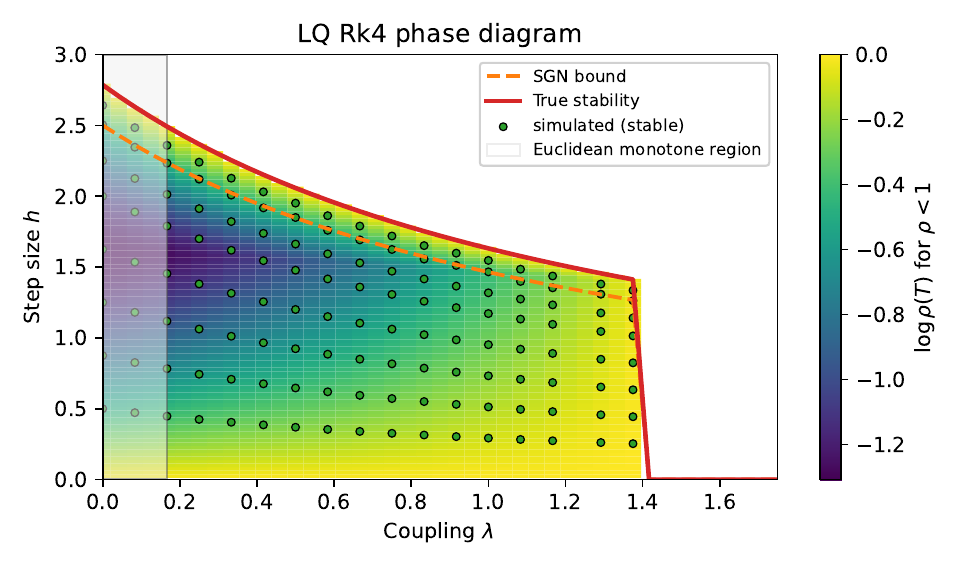}
  \caption{Discrete-time stability phase diagrams for the canonical LQ game in the balanced SGN metric.
  Left: projected Euler; right: RK4.
  The heat map shows $\log\rho(T(\lambda,h))$ for the one-step matrix (stable when $\log\rho<0$), and the orange dashed curve is the certified SGN step-size bound, which lies strictly inside the true stability boundary (red solid curve) across all coupling strengths $\lambda$ while remaining within a moderate factor of the empirical stability threshold.}
  \label{fig:lq-phase}
\end{figure}

\FloatBarrier

Additional numerical results for this LQ game---including continuous-time flow norms, structured coupling-noise robustness, and a random LQ ensemble---are collected in Appendix~\ref{app:lq-exp}.
These plots show that SGN-based margins closely track the true metric margins wherever certification succeeds, become gradually more conservative as the couplings deviate from the structured canonical case, and still provide nontrivial RK4 step-size bounds across a generic ensemble of random LQ games.

\section{Verifying SGN on Compact Regions}

The SGN condition is phrased in terms of block curvature and Lipschitz parameters $(\mu,L)$ and a Lipschitz constant $\beta$ for $G=-F$.
We briefly outline a certification pipeline showing that these quantities can be bounded on compact regions of interest.
In practice, the pipeline is most natural for games with modest effective dimension or exploitable structure (such as the quadratic and tabular Markov examples below, or latent-space or layer-wise subsystems of larger models), rather than full parameter spaces of very large deep networks.

\subsection{Certification pipeline and guarantees}

Let $\mathcal{R}\subseteq\X$ be a compact region, interpreted as a trust region or a forward-invariant set candidate.
In many applications it is natural to take $\mathcal{R}$ itself to be a sublevel set of the SGN Lyapunov function, e.g., an $M(w)$-ball $B_r(x^*)$ or a more general ellipsoidal set $\{x:\,V(x)\le c\}$ around a reference point $x^*$; for such sets, contraction of the flow or one-step map immediately yields forward invariance as in \cref{cor:ball-forward}.
Given a block metric $P=\mathrm{diag}(P_i)$, the pipeline proceeds as follows:
\begin{enumerate}
  \item \textbf{Curvature and couplings.}
  Estimate block curvature and cross-player couplings $(\mu_i^{\mathrm{lo}},L_{ij}^{\mathrm{hi}})$ in the $P$-geometry on $\mathcal{R}$ so that the block bounds~\eqref{eq:block-bounds} hold there with $(\mu,L)=(\mu^{\mathrm{lo}},L^{\mathrm{hi}})$.
  \item \textbf{Local Lipschitz constant.}
  For a weight vector $w\in\R_{++}^N$ and associated metric $M(w)$, bound Jacobian norms of $G=-F$ in the $M(w)$-geometry on $\mathcal{R}$ to obtain a local Lipschitz constant $\beta^{\mathrm{hi}}(w)$ obeying~\eqref{eq:G-Lip}.
  \item \textbf{Small-gain margin and geometry design.}
  With $(\mu^{\mathrm{lo}},L^{\mathrm{hi}})$ fixed, search over $w\in\R_{++}^N$ and margins $\alpha>0$ such that the small-gain matrix $C(w,\alpha)$ from~\eqref{eq:Cwalpha} is positive definite, defining an SGN margin $\alpha_*(w)$ and optionally choosing $w$ to trade off $\alpha_*(w)$ against $\beta^{\mathrm{hi}}(w)$.
  \item \textbf{Certificate and step-sizes.}
  For a chosen $w$, output the metric $M(w)$, margin $\alpha:=\alpha_*(w)$, and local Lipschitz bound $\beta^{\mathrm{hi}}(w)$, which together (via \cref{lem:strong-mono,thm:proj-euler,thm:rk4}) yield a contraction certificate and safe step-size ranges for projected Euler and RK4 on $\mathcal{R}$.
\end{enumerate}

Implementation details for curvature/coupling and Lipschitz estimation (analytic bounds versus sampling and spectral probes) are collected in Appendix~\ref{app:sgn-cert}.

\subsection{Operational usage}

We emphasize that certification is intended as an \emph{offline} or slowly updated design-time procedure.
Once a region $\mathcal{R}$, metric $M(w)$, margin $\alpha$, and Lipschitz bound $\beta^{\mathrm{hi}}$ have been certified, one can run projected Euler or RK4 with the corresponding step-size policies
\[
  0<\eta<\frac{2\alpha}{(\beta^{\mathrm{hi}})^2},\qquad
  0<h\le \frac{C_4}{\beta^{\mathrm{hi}}}.
\]
If the system's operational regime changes slowly, one may periodically re-certify on updated regions, separating the heavier offline certification cost from the lightweight online iteration cost.

\section{Extensions: Mirror Geometry and Markov Games}

We now illustrate how the SGN viewpoint extends beyond fixed Euclidean block metrics.

\subsection{Mirror SGN on the simplex}
\label{subsec:mirror-sgn}

In games with mixed strategies, each player's action lies on a probability simplex $\Delta_{m_i}$ or, more precisely, on its relative interior $\Delta_{m_i}^\circ=\{x_i\in\R^{m_i}:\ x_{i,k}>0,\ \sum_k x_{i,k}=1\}$ equipped with the Euclidean structure induced by the affine hyperplane $\{\sum_k x_{i,k}=1\}$.
The SGN framework is geometry-agnostic: block curvature and coupling parameters $(\mu_i,L_{ij})$ can be defined using local mirror metrics and associated Bregman divergences instead of Euclidean norms~\citep{nemirovski1983}.
In Appendix~\ref{app:mirror-sgn}, we formalize a mirror-SGN condition on such simplices and show that, under the same block bounds and a mirror version of $\mathrm{SGN}(w,\alpha)$, the mirror pseudo-gradient flow is exponentially contractive with Lyapunov function
$
  V(x)=\sum_i w_i D_{\psi_i}(x_i^*\Vert x_i)
$
and converges to an interior Nash equilibrium (Theorem~\ref{thm:mirror-flow}), while mirror-Euler and mirror-RK schemes inherit geometric decay of $V$ (Corollary~\ref{cor:mirror-euler-rk}).
Choosing entropic mirror maps on simplices recovers contraction guarantees in the natural Fisher/information geometry for entropy-regularized policy-gradient dynamics.

\subsection{Markov game experiment: mirror-SGN for natural policy gradient}

We next summarize a complementary \emph{pseudo-gradient} validation in a small tabular Markov game that instantiates the mirror-SGN geometry and the discrete-time step-size bounds.

\paragraph{Setup.}
We consider a discounted two-player Markov game~\citep{puterman1994} with state space $\mathcal{S}=\{s_0,s_1\}$, binary actions $\mathcal{A}_i=\{0,1\}$, and discount factor $\gamma=0.9$.
At each state $s$, both players receive a cooperative reward that is $+1$ when they coordinate ($00$ or $11$) and $-1$ on mismatches; the transition kernel encourages staying in the current state under consistent coordination and flips the state under the opposite coordinated action or mismatches.
We choose a minimal tabular setting to allow exact visualization of the Fisher-SGN geometry, which would be obscured in high-dimensional approximations.
Each player~$i$ uses a tabular softmax policy $\pi_i(\cdot\mid s)$ and an entropy-regularized objective
\[
  J_i(\pi)
  = \mathbb{E}_\pi\Big[\sum_{t=0}^\infty \gamma^t r(s_t,a_t)\Big]
    + \tau \sum_{s\in\mathcal{S}} H\big(\pi_i(\cdot\mid s)\big),
  \qquad f_i = -J_i,
\]
with negative entropy $H$ and regularization strength $\tau=1$.
The mirror map is therefore the negative entropy on $\Delta_2\times\Delta_2$ and the associated local metric is the Fisher information matrix of the softmax parametrization, matching the mirror-SGN construction of Appendix~\ref{app:mirror-sgn}; we denote the player-wise Fisher blocks by $\mathcal{F}_i(\theta)$.
In these Fisher coordinates the pseudo-gradient $F(\theta)=(\nabla_{\theta_1} f_1,\nabla_{\theta_2} f_2)$ is well-defined, and the natural policy gradient update
\[
  \theta_i^{k+1}
  = \theta_i^k - \eta\,\mathcal{F}_i(\theta^k)^{-1}\,\nabla_{\theta_i} f_i(\theta^k)
\]
is the standard natural policy gradient update (with scalar step size $\eta$), which we analyze using the weighted Fisher metric $M(w)=\mathrm{diag}\bigl(w_1 \mathcal{F}_1(\theta),w_2 \mathcal{F}_2(\theta)\bigr)$ to certify contraction.

\paragraph{Local mirror-SGN certificate.}
At the unique interior Nash equilibrium $\theta^*=0$ (uniform policies), we certify a small neighbourhood in logit space by gridding a cube
$\{\theta:\,\|\theta-\theta^*\|_\infty \le 0.1\}$ and evaluating the Fisher-normalized curvature, couplings, and Jacobian at each grid point (using the sampling-based SGN pipeline of Appendix~\ref{app:sgn-cert}).
For every grid point $\theta$ we form the own and cross Hessian blocks of $f_i$ and rescale them as in Appendix~\ref{app:mirror-sgn} to obtain local $(\mu_i(\theta),L_{ij}(\theta))$.
Taking the minimum and maximum over this grid yields conservative block bounds
$\mu_1^{\mathrm{lo}}\approx\mu_2^{\mathrm{lo}}\approx 0.93$ and
$L_{12}^{\mathrm{hi}}\approx L_{21}^{\mathrm{hi}}\approx 0.60$ for the mirror geometry around $\theta^*$, so the mirror-SGN two-player matrix $H(w)$ admits a positive margin for a nontrivial range of weight ratios $r=w_2/w_1$.
Optimizing over $r$ with these bounds gives a best diagonal margin $\alpha_*(w_\star)\approx 0.33$ at a near-balanced ratio $r_\star\approx 0.98$.
Using the same Fisher metric and grid we then bound the Jacobian log-norm of the natural-gradient field in the metric $M(w_\star)$, obtaining a local Lipschitz bound $\beta\approx 1.57$.
By the mirror-SGN contraction theorem (Appendix~\ref{app:mirror-sgn}) and the discrete-time Euler bound of \cref{thm:proj-euler}, any step size
\[
  0 < \eta < \eta_{\mathrm{SGN}}
  := \frac{2\alpha_*(w_\star)}{\beta^2}
  \approx 0.27
\]
ensures that the mirror Lyapunov function
$V(\theta)=\sum_i w_i D_{\psi_i}(x_i^*\Vert x_i(\theta))$ contracts along the natural policy gradient iterates in a neighbourhood of $\theta^*$ at a rate predicted by the factor $\sqrt{1-2\alpha_*\eta+\beta^2\eta^2}$.
In particular, in this tabular Markov game, NPG enjoys a certified local linear convergence guarantee to the interior Nash equilibrium, with contraction rate and admissible step-size band determined explicitly by $(\alpha_*(w_\star),\beta)$ and the Fisher-SGN metric $M(w_\star)$.

\paragraph{Natural vs Euclidean policy gradient.}
We instantiate the certified dynamics at a conservative step $\eta=0.5\,\eta_{\mathrm{SGN}}$ and compare:
(i) natural policy gradient (NPG) in the Fisher metric $M(w_\star)$, and
(ii) Euclidean policy gradient (EPG) on the logit parameters with the same scalar step size $\eta$.
For each method we initialize policies near uniform and track the mirror Lyapunov $V_k$ and the $\ell_2$ distance to $\theta^*$.
As shown in \cref{fig:markov-lyap}, NPG exhibits clean exponential decay of $\log V_k$ at a rate substantially faster than EPG, and the empirical slope lies safely below the SGN upper bound implied by $(\alpha_*,\beta)$; EPG also converges in this benign setting, but with a significantly slower decay and no comparable stepsize certificate.
The right panel of \cref{fig:markov-lyap} sweeps the NPG and EPG step sizes over multiples of $\eta_{\mathrm{SGN}}$ and shows that NPG is empirically stable (convergent across seeds) for $\eta/\eta_{\mathrm{SGN}}\lesssim 1$, with increasingly frequent failures beyond the SGN line, while EPG has a different and method-dependent stability profile.
This matches the intended role of SGN as a \emph{sufficient} design-time bound for NPG rather than a sharp, method-independent phase transition.

\begin{figure}[htbp]
  \centering
  \includegraphics[width=0.48\linewidth]{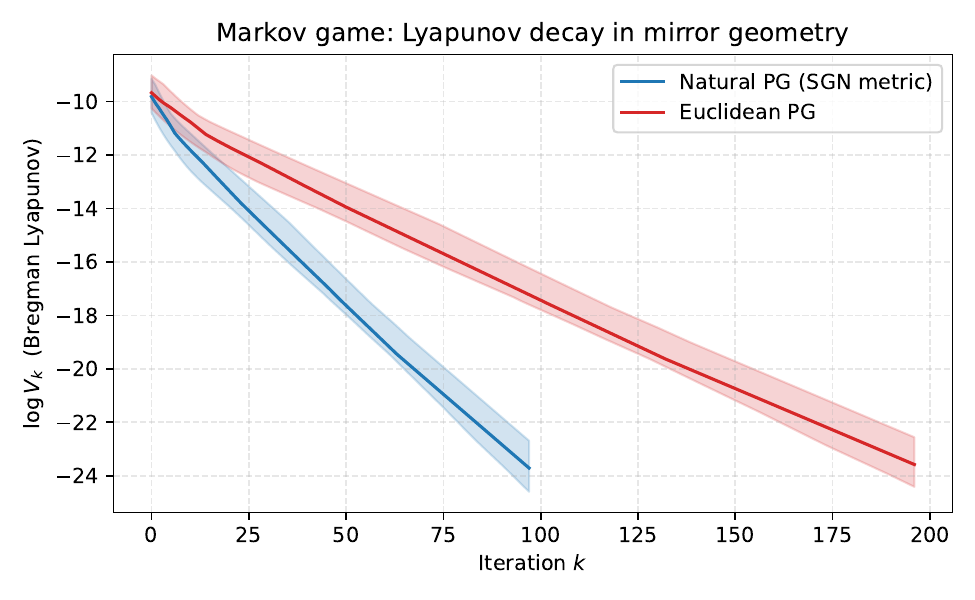}\hfill
  \includegraphics[width=0.48\linewidth]{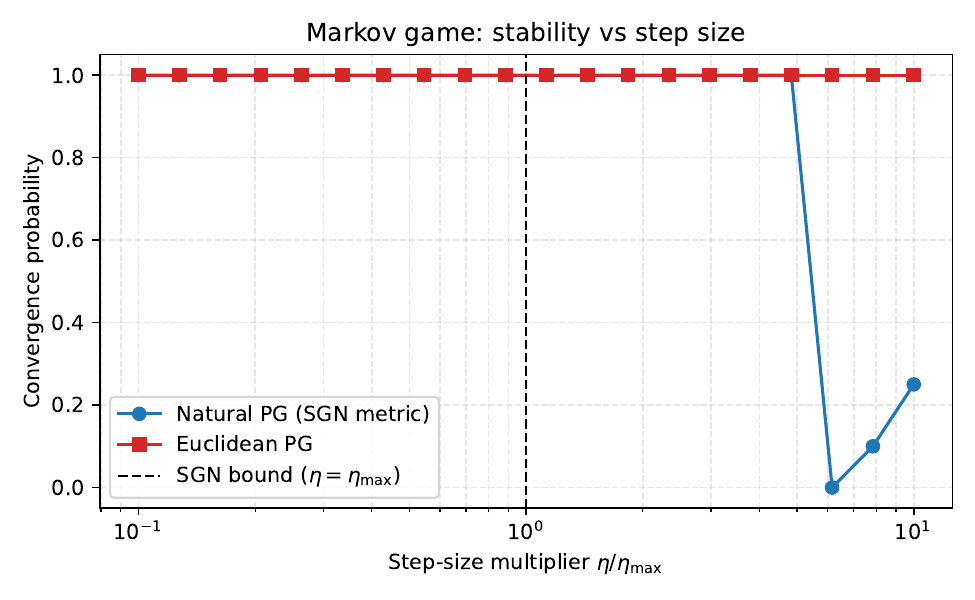}
  \caption{Mirror Lyapunov decay and stability for natural (blue) vs Euclidean (red) policy gradient in the tabular Markov game. Left: natural policy gradient in the Fisher metric exhibits clean exponential decay of the Lyapunov function, substantially faster than Euclidean policy gradient. Right: convergence probability (fraction of seeds classified as convergent---either triggering the small-gradient stopping criterion or ending with $\|\theta_k-\theta^*\|_2 < 1.0$ after 500 steps) as a function of the step-size multiplier $\eta/\eta_{\mathrm{SGN}}$, showing that NPG remains stable up to roughly the SGN bound while EPG has a different stability profile.}
  \label{fig:markov-lyap}
\end{figure}

\FloatBarrier

\paragraph{Markov timescale band.}
Finally, we examine the effect of relative player timescales by varying the weight ratio $r=w_2/w_1$ in the Fisher metric while keeping the game and regularization fixed.
Using the same locally estimated $(\mu_i,L_{ij})$ we compute the mirror-SGN margin $\alpha_*(r)$ from the $2\times 2$ matrix $H(w)$ and obtain a nontrivial band of ratios where $\alpha_*(r)>0$; \cref{fig:markov-timescale} plots this band.
The band has the same qualitative shape as in the LQ game (\cref{fig:lq-timescale}): extreme timescale imbalances violate the small-gain inequalities, while a moderate interval of ratios admits a diagonal mirror metric that certifies local contraction.
This Markov example therefore confirms that the SGN timescale intuition and metric-design perspective extend beyond static quadratic games to entropy-regularized policy gradients in tabular Markov games.

\begin{figure}[htbp]
  \centering
  \includegraphics[width=0.6\linewidth]{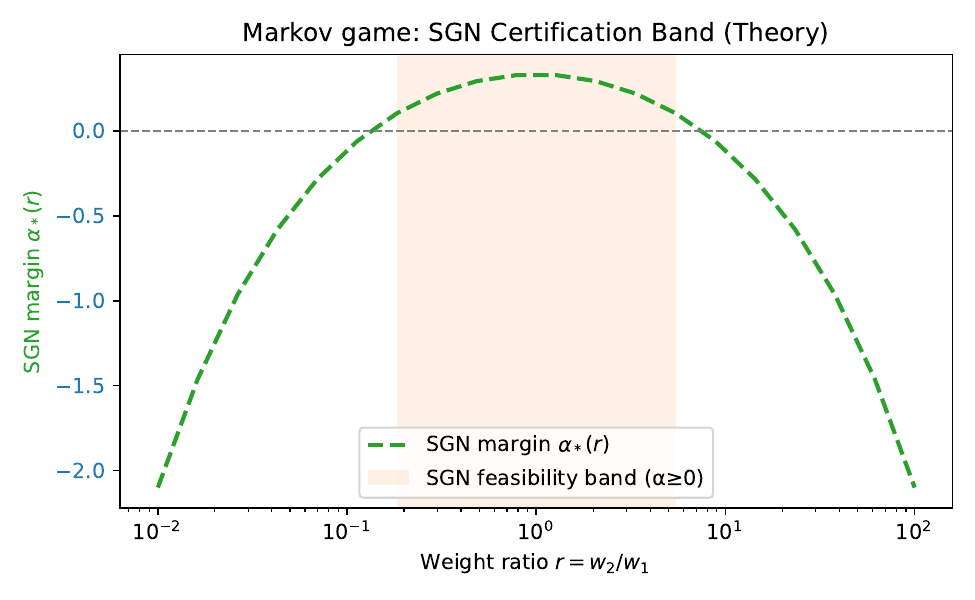}
  \caption{Mirror-SGN timescale band for the tabular Markov game: SGN margin as a function of the weight ratio $r=w_2/w_1$ in the Fisher metric, highlighting a nontrivial interval of ratios that admit a diagonal mirror metric certifying local contraction.}
  \label{fig:markov-timescale}
\end{figure}

\FloatBarrier

\section{Future Work}
\label{sec:future-work}

An immediate direction is to relax the convexity assumptions to systematically estimate basins of attraction in nonconvex landscapes, moving beyond our current certification of fixed compact trust regions to the analysis of bifurcation phenomena and multi-equilibrium stability.
A further step is to extend the mirror/Fisher SGN framework beyond finite-dimensional simplices to more general statistical manifolds and function-approximation policies, where the underlying Fisher metrics live on infinite-dimensional spaces of probability distributions.

Our tabular Markov game example and mirror/Fisher extension highlight a closely related but distinct direction: modeling reinforcement learning algorithms (such as actor-critic) as explicit two-block SGN systems on a statistical manifold, with a policy block in Fisher geometry and an environment/critic block capturing Bellman dynamics.
This ``Fisher-Bellman small-gain'' viewpoint would rigorously decouple policy entropy (curvature) from the value-function scale and discount horizon (coupling), potentially yielding sharper stability conditions for actor-critic methods than those obtained by treating the Bellman operator as a generic Lipschitz disturbance.
Pursuing this model could lead to small-gain conditions that explicitly balance entropy-induced curvature against reward scale and discount horizon; we leave a full development of such Fisher-Bellman SGN models to future work.

Beyond existence of a suitable metric, one can treat SGN as a \emph{metric design} problem: optimize over diagonal or block metrics to improve a contraction margin relative to a Lipschitz or log-norm bound (e.g., by minimizing a matrix measure of $J_G$ or a condition proxy such as $\beta/\alpha_*$), and embed this into an IQC/dissipativity framework~\citep{megretski1997,boyd1994,lessard2016} that co-designs both geometry and algorithm parameters, potentially via vector-dissipativity viewpoints where each player contributes its own Lyapunov component.
Finally, systematic empirical evaluation remains important: implementing the certification pipeline on structured quadratic, Markov, and moderately sized differentiable games; comparing certified step sizes and rates with empirically stable regimes; and studying how the cost and conservatism of SGN certificates scale with dimension and number of players.
Such studies would indicate which game classes are most amenable to SGN-based certification and where additional structural assumptions or approximations are needed.

\section{Related Work}

Convergence of gradient-based dynamics to Nash equilibria under monotonicity conditions has a long history, starting with Rosen's diagonal strict concavity~\citep{rosen1965} and the variational inequality framework~\citep{facchinei2003}.
Rosen's condition can be interpreted as the existence of a positive diagonal scaling that renders the symmetric part of the pseudo-gradient positive definite; in matrix terms, this is closely related to diagonal stability conditions of the form $DJ + J^\top D\succ 0$ for some $D\succ 0$~\citep[see, e.g., diagonal stability surveys in control and network theory]{boyd1994}.
Our SGN condition is a block-diagonal, small-gain refinement of these ideas: instead of arbitrary scalings we work with coarse block summaries $(\mu,L)$ and a structured weight vector $w$, and we phrase the diagonal dominance condition in terms of the small-gain matrix $C(w,\alpha)$, which plays the role of a block Gershgorin/diagonal-stability certificate for the pseudo-gradient in the metric $M(w)$.

A closely related line of work applies contraction theory directly to Nash equilibrium seeking. \citet{gokhale2023contractivity} derive gain-matrix conditions for strong contractivity of pseudo-gradient and best-response dynamics in weighted Euclidean seminorms. In their setting, each player's dynamics are strongly contracting in an individual norm, and the cross-couplings enter through a Metzler gain matrix $\Gamma$; Hurwitzness of $\Gamma$, together with a network contraction theorem, implies the existence of a weighted norm under which the joint Nash dynamics are contracting. In the Euclidean, single-block case our SGN condition induces a gain matrix with the same qualitative structure, and the requirement $C(w,\alpha)\succ 0$ can be read as a block-diagonal refinement of these Metzler-matrix conditions.

Small-gain theorems originate in robust and nonlinear control~\citep{zames1966,jiang1994} and provide interconnection stability under gain composition conditions.
The normalized gain matrix $H(w)$ and the SGN matrix $C(w,\alpha)$ can be viewed as input-output gain composition inequalities for the players' subsystems, with $\mu_i$ and $L_{ij}$ acting as self- and cross-gains; the SGN feasibility region is then directly analogous to classical small-gain inequalities phrased in terms of spectral radii or diagonal scalings.

Passivity-based methods in games~\citep{feijer2010,lessard2016} yield convergence under energy dissipation conditions, while potential games~\citep{monderer1996} admit convergence via scalar potential descent.
More recent notions such as variational stability and coherence~\citep{mertikopoulos2018} relax monotonicity via curvature or averaging assumptions.
Our work complements these by providing a block-structured sufficient condition, expressed in terms of curvature and Lipschitz bounds, together with an explicit metric-design viewpoint that selects a block-diagonal geometry to recover strong monotonicity on a certified region.

On the contraction side, our DSC condition is a constant-metric instance of the general contraction-metric framework in nonlinear control and dynamical systems, where one searches for a (possibly state-dependent) Riemannian metric $M(x)$ that renders a flow contractive in differential coordinates~\citep{lohmiller1998,forni2014}.
In full generality the contraction inequality must account for the Levi-Civita connection and the Lie derivative of $M(x)$, whereas SGN deliberately restricts attention to constant, block-diagonal metrics $M(w)$ and expresses contractivity via algebraic small-gain inequalities in terms of block curvature and coupling parameters $(\mu_i,L_{ij})$.
On regions where SGN holds, the resulting metric $M(w)$ induces a simple quadratic (or Bregman, in the mirror case) Lyapunov function, so the condition can be read as identifying a class of games that are provably contractive together with their explicit Lyapunov certificates.

On the time-discretization side, explicit integrators for contractive systems have been extensively studied in numerical analysis~\citep{hairer1993,butcher2003,soderlind2006,trefethen2005}.
We build directly on this literature: the log-norm/matrix-measure bounds and stability regions for explicit Euler and RK methods in general norms are classical, and our contribution is to instantiate these results in the SGN metric and to translate them into concrete step-size policies for projected Euler and RK4 in the geometries induced by $M(w)$ and its mirror variants.

Algorithms such as Extragradient~\citep{korpelevich1976} and OGDA~\citep{popov1980,mertikopoulos2018} improve convergence in monotone and certain non-monotone settings.
SGN is complementary to these methods: it provides a structural condition and a geometry in which both simple methods (Euler, RK) and more sophisticated ones (EG, OGDA) can be analyzed, and could in principle be combined with them by applying SGN-based metric design to their underlying pseudo-gradient operators.

\section{Conclusion}

We introduced a block-structured small-gain condition, Small-Gain Nash, that certifies strong monotonicity of the pseudo-gradient in a tailored block geometry.
Under SGN, the continuous pseudo-gradient flow and simple explicit discretizations are contracting (on the region where SGN and Lipschitz bounds hold) with explicit step-size policies.
The framework extends to a local mirror/Fisher geometry on simplices---where Bregman Lyapunov functions and mirror flow/discretization contraction guarantees can be obtained---and we have outlined how the same small-gain ideas can be employed in closed-loop Markov games.
We also described a certification pipeline that, on compact regions, turns the structural SGN condition into a formal and computable convergence certificate.

Our viewpoint is closer to control and numerical analysis than to benchmark optimization: the goal is not a new training algorithm, but a framework for \emph{certifying} that existing dynamics will converge in a given game.
We believe that such structural and geometric tools are essential for bringing rigorous guarantees to multi-agent and other feedback-coupled learning and control systems where stability is non-negotiable.

\paragraph{Code availability.}
Code reproducing all experiments is available at:\\
\url{https://github.com/AashVed/SmallGainNash}.

\bibliographystyle{abbrvnat}
\bibliography{references}

\appendix

\section{Additional Details on Verifying Small-Gain}
\label{app:sgn-cert}

We briefly describe how the bounds $(\mu^{\mathrm{lo}},L^{\mathrm{hi}},\beta^{\mathrm{hi}})$ in the certification pipeline can be obtained in practice on a compact region $\mathcal{R}$, and collect proofs of the core continuous- and discrete-time contraction statements used in the main text.

\paragraph{Tier 1: analytic bounds.}
If the $f_i$ are given by structured expressions (e.g., quadratic forms, compositions of layers with known spectral bounds), one can propagate bounds on Hessian blocks and Jacobian norms across the computational graph using standard rules:
positive semidefinite lower bounds for own-player Hessians and operator-norm upper bounds for cross-player blocks.
This yields conservative but inexpensive estimates of $\mu_i^{\mathrm{lo}}$ and $L_{ij}^{\mathrm{hi}}$, as well as a bound $\beta^{\mathrm{hi}}$ on $\norm{J_G(x)}_{M(w)\to M(w)}$ over $\mathcal{R}$.

\paragraph{Tier 2: refinement via sampling.}
To reduce conservatism, one can sample $\mathcal{R}$ on a finite cover $S$ (e.g., a low-discrepancy design or a grid in low dimension) and, at each $s\in S$, perform:
\begin{itemize}
  \item power iteration on block Jacobians to estimate $\norm{J_{ij}(s)}_{P_j\to P_i}$ for each $(i,j)$, updating $L_{ij}^{\mathrm{hi}}$ to the maximum over samples;
  \item Lanczos iterations on $\nabla^2_{x_i x_i} f_i(s)$ to estimate the smallest eigenvalue in the $P_i$-metric, updating $\mu_i^{\mathrm{lo}}$ to the minimum over samples;
  \item power iteration on $J_G(s)$ in the $M(w)$-metric to refine $\beta^{\mathrm{hi}}$.
\end{itemize}
If the Hessian and Jacobian vary Lipschitzly on $\mathcal{R}$, these sample-wise estimates can be inflated by Lipschitz bounds and the cover radius to yield global bounds on $\mathcal{R}$ in the $M(w)$-geometry.
In our experiments we use the sampling-based estimates directly without interval inflation, so the resulting certificates are numerically validated but not worst-case rigorous in the strict sense.

\paragraph{From bounds to SGN and steps.}
Given $(\mu^{\mathrm{lo}},L^{\mathrm{hi}},\beta^{\mathrm{hi}})$ on $\mathcal{R}$, one then searches for weights $w\in\R_{++}^N$ and a margin $\alpha>0$ such that $C(w,\alpha)\succ 0$ with entries defined by~\eqref{eq:Cwalpha}, using, e.g., a line search over $\alpha$ and an eigenvalue check for positive definiteness.
Any such $(w,\alpha)$ yields a certified SGN margin and metric $M(w)$ on $\mathcal{R}$.
When Jacobian information is available, one can optionally refine the margin by estimating a directional/DSC constant $\alpha_{\mathrm{dsc}}\ge\alpha$ from the $M(w)$-symmetric Jacobian $S_{M(w)}(x)$ via the condition~\eqref{eq:dsc} and replace $\alpha$ by $\max\{\alpha,\alpha_{\mathrm{dsc}}\}$.
By \cref{lem:strong-mono,thm:dsc,thm:proj-euler,thm:rk4} and standard results on strongly monotone flows, any resulting $\alpha$ and $\beta^{\mathrm{hi}}$ produce explicit step-size ranges for projected Euler and RK4 that guarantee contraction on $\mathcal{R}$.

\subsection{Gershgorin-style sufficient conditions for SGN}
\label{subsec:normalized-gain}

The matrix condition $C(w,\alpha)\succ 0$ is the exact SGN certificate.
Here we record a conservative but easy-to-check alternative that depends only on curvature, couplings, and the weight ratios $w_i/w_j$.
This condition implies SGN (and hence strong monotonicity) but need not be necessary.

On a region $\mathcal{R}$ where the block bounds~\eqref{eq:block-bounds} hold, introduce the nonnegative scalars
\[
  a_i := \sqrt{w_i}\,\norm{x_i-y_i}_{P_i},\qquad i=1,\dots,N,
\]
and coefficients
\[
  k_{ij} := L_{ij}\sqrt{\frac{w_i}{w_j}}\qquad(i\neq j).
\]
Define a symmetric matrix $H(w)\in\R^{N\times N}$ by
\begin{equation}
  H_{ii}(w) := \mu_i,\qquad
  H_{ij}(w) := -\,\tfrac{1}{2}\bigl(k_{ij}+k_{ji}\bigr)\quad(i\neq j).
  \label{eq:H-w-def}
\end{equation}
Combining the block bounds with these definitions yields the quadratic inequality
\begin{equation}
  \ip{x-y}{F(x)-F(y)}_{M(w)}
  \;\ge\; a^\top H(w)\,a
  \qquad\text{for all }x,y\in\mathcal{R}.
  \label{eq:H-lower-bound}
\end{equation}

If $H(w)\succeq\alpha I$ for some $\alpha>0$, then $a^\top H(w)a\ge \alpha\norm{a}_2^2$ for all $a$.
Moreover,
\[
  \norm{a}_2^2 = \sum_i w_i \norm{x_i-y_i}_{P_i}^2 = \norm{x-y}_{M(w)}^2.
\]
Thus from~\eqref{eq:H-lower-bound} we obtain the strong monotonicity inequality
\begin{equation}
  \ip{x-y}{F(x)-F(y)}_{M(w)}
  \;\ge\; \alpha\,\norm{x-y}_{M(w)}^2
  \qquad\forall x,y\in\mathcal{R}.
  \label{eq:H-strong-mono}
\end{equation}
In particular, $F$ is $\alpha$-strongly monotone in the metric $M(w)$, and the conclusions of \cref{lem:strong-mono,cor:unique-nash} and the exponential convergence of the pseudo-gradient flow hold with this margin.

One convenient way to certify $H(w)\succeq\alpha I$ is via strict diagonal dominance with a margin.
This yields the following constructive sufficient condition.

\begin{theorem}[Gershgorin-type sufficient condition]
\label{thm:gershgorin-normalized}
Let $\mu_{\min}:=\min_i \mu_i$.
Suppose there exist weights $w_i>0$ and a number $\alpha\in(0,\mu_{\min})$ such that, for every $i$,
\begin{equation}
  \mu_i - \alpha
  \;>\;
  \frac{1}{2}\sum_{j\neq i}\Bigl(
    L_{ij}\sqrt{\frac{w_i}{w_j}}
    + L_{ji}\sqrt{\frac{w_j}{w_i}}
  \Bigr).
  \label{eq:normalized-gershgorin}
\end{equation}
Then $H(w)\succeq\alpha I$, and hence $F$ is $\alpha$-strongly monotone in the metric $M(w)$ on $\mathcal{R}$.
\end{theorem}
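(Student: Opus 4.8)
The plan is to recognize the hypothesis~\eqref{eq:normalized-gershgorin} as the statement that the symmetric matrix $H(w)-\alpha I$ is strictly diagonally dominant with positive diagonal, deduce positive definiteness via Gershgorin's theorem, and then plug this into the lower bound~\eqref{eq:H-lower-bound} already established before the theorem.

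First I would write $H(w)-\alpha I$ out explicitly. Its $(i,i)$ entry is $\mu_i-\alpha$, which is positive because $\alpha<\mu_{\min}\le\mu_i$, and its $(i,j)$ entry for $i\neq j$ is $H_{ij}(w)=-\tfrac12\bigl(k_{ij}+k_{ji}\bigr)$ with $k_{ij}=L_{ij}\sqrt{w_i/w_j}$. Since $L_{ij},L_{ji}\ge 0$ and $w_i,w_j>0$, each $k_{ij}\ge 0$, so the off-diagonal entries are nonpositive and $\bigl|H_{ij}(w)\bigr|=\tfrac12\bigl(k_{ij}+k_{ji}\bigr)$. Hence~\eqref{eq:normalized-gershgorin} says exactly that
\[
  \mu_i-\alpha \;>\; \sum_{j\neq i}\bigl|H_{ij}(w)\bigr|
  \qquad\text{for every }i,
\]
i.e.\ every Gershgorin disc of $H(w)-\alpha I$ is centered at the positive number $\mu_i-\alpha$ with radius strictly less than $\mu_i-\alpha$, so it lies in the open right half-plane. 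Because $H(w)$, and therefore $H(w)-\alpha I$, is symmetric, all its eigenvalues are real, so they are all strictly positive; thus $H(w)\succ\alpha I$, and in particular $H(w)\succeq\alpha I$.

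Then I would combine this with the quadratic lower bound~\eqref{eq:H-lower-bound}: for all $x,y\in\mathcal{R}$, setting $a_i:=\sqrt{w_i}\,\norm{x_i-y_i}_{P_i}$ we have $\ip{x-y}{F(x)-F(y)}_{M(w)}\ge a^\top H(w)\,a\ge \alpha\norm{a}_2^2$, and since $\norm{a}_2^2=\sum_i w_i\norm{x_i-y_i}_{P_i}^2=\norm{x-y}_{M(w)}^2$ this yields~\eqref{eq:H-strong-mono}, i.e.\ $\alpha$-strong monotonicity of $F$ in the $M(w)$-geometry on $\mathcal{R}$, as claimed.

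There is essentially no hard step here. The only points requiring a little care are (i) noting that the off-diagonal entries of $H(w)$ are nonpositive, so the Gershgorin radius equals the sum appearing in~\eqref{eq:normalized-gershgorin} rather than something larger, and (ii) using symmetry to pass from ``eigenvalues in the right half-plane'' to ``real positive eigenvalues''. The substantive content is carried entirely by the earlier derivation of~\eqref{eq:H-lower-bound}, which the statement is entitled to assume.
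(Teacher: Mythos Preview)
Your proposal is correct and matches the paper's own proof almost verbatim: both recognize that the off-diagonal entries of $H(w)$ are nonpositive, read~\eqref{eq:normalized-gershgorin} as strict diagonal dominance of $H(w)-\alpha I$, invoke Gershgorin to conclude $H(w)\succeq\alpha I$, and then substitute into~\eqref{eq:H-lower-bound}. Your explicit remark that symmetry is needed to turn ``eigenvalues in the right half-plane'' into ``real positive eigenvalues'' is a nice touch that the paper leaves implicit.
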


\begin{proof}[Proof of \cref{thm:gershgorin-normalized}]
By construction~\eqref{eq:H-w-def}, all off-diagonal entries of $H(w)$ are nonpositive.
Condition~\eqref{eq:normalized-gershgorin} states that the diagonal entries of $H(w)-\alpha I$ are strictly larger than the sum of the absolute values of the off-diagonal entries in the corresponding rows.
By Gershgorin's circle theorem, all eigenvalues of $H(w)-\alpha I$ are strictly positive, so $H(w)\succeq\alpha I$.
Substituting this into~\eqref{eq:H-lower-bound} and using $\norm{a}_2^2=\norm{x-y}_{M(w)}^2$ yields~\eqref{eq:H-strong-mono}, and the final claim follows from the discussion in \cref{subsec:normalized-gain}.
\end{proof}

The condition~\eqref{eq:normalized-gershgorin} depends on $w$ only through the ratios $w_i/w_j$.
In the simplest case $w_i\equiv 1$ one obtains a ``Rosen-type'' Euclidean sufficient condition.
Optimizing over $w$ allows one to reduce the right-hand side of~\eqref{eq:normalized-gershgorin} and enlarge the admissible margin~$\alpha$.
It is often convenient to express~\eqref{eq:normalized-gershgorin} in terms of a normalized gain matrix.
For example, fixing reference weights $\bar w_i>0$ and defining
\[
  K_{ij} := \frac{L_{ij}}{\mu_i},\quad K_{ii}:=0,
\]
classical small-gain and diagonal-scaling arguments (see, e.g.,~\citep{zames1966,jiang1994,boyd1994}) imply that the spectral radius $\rho(K)$ controls the existence of diagonal scalings that satisfy inequalities of the form~\eqref{eq:normalized-gershgorin}.
If $\rho(K)<1$, then there exists a positive vector $v$ and a constant $\lambda\in(0,1)$ such that
\[
  \sum_{j\neq i} K_{ij} v_j \;\le\; \lambda v_i,\qquad i=1,\dots,N,
\]
and any such $v$ can be converted into a choice of weights $w$ guaranteeing~\eqref{eq:normalized-gershgorin} with a margin $\alpha$ proportional to $(1-\lambda)$.
The Perron-Frobenius eigenvector of $K$ corresponds to the least conservative such scaling.
We do not pursue a full spectral characterization here and instead use~\eqref{eq:normalized-gershgorin} as a practical sufficient condition that can be evaluated in $O(N^2)$ time independently of the parameter dimension.

\subsection{Two-player SGN band}
\label{app:two-player-band}

We record here a closed-form characterization of the two-player timescale band used in the main text.
Consider two players with curvature parameters $\mu_1,\mu_2>0$ and cross-Lipschitz constants $L_{12},L_{21}\ge 0$.
For any weights $w_1,w_2>0$ and any margin $\alpha\in\R$, the SGN matrix specializes to
\[
  C(w,\alpha)
  =
  \begin{bmatrix}
    2w_1(\mu_1-\alpha) & -(w_1L_{12}+w_2L_{21})\\
    -(w_1L_{12}+w_2L_{21}) & 2w_2(\mu_2-\alpha)
  \end{bmatrix}.
\]
Positive definiteness $C(w,\alpha)\succ 0$ is equivalent to $C_{11}>0$, $C_{22}>0$, and $\det C>0$.
The diagonal conditions simply require
\[
  \mu_1>\alpha,\qquad \mu_2>\alpha,
\]
and do not constrain the ratio $w_2/w_1$ as long as $w_i>0$.
The nontrivial constraint comes from the determinant:
\[
  \det C(w,\alpha)
  =
  4w_1w_2(\mu_1-\alpha)(\mu_2-\alpha) - (w_1L_{12}+w_2L_{21})^2 > 0.
\]
Introducing the ratio
\[
  r := \frac{w_2}{w_1} > 0
\]
and dividing by $w_1^2$ gives
\begin{equation}
  4(\mu_1-\alpha)(\mu_2-\alpha)\,r
  \;-\; (L_{12}+rL_{21})^2
  \;>\; 0.
  \label{eq:two-player-det-ineq}
\end{equation}
Expanding \eqref{eq:two-player-det-ineq} yields a strict quadratic inequality in $r$.
Real roots exist if and only if
\begin{equation}
  (\mu_1-\alpha)(\mu_2-\alpha) > L_{12}L_{21},
  \label{eq:two-player-condition}
\end{equation}
which is the two-player analogue of Rosen's diagonal strict concavity condition (in the symmetric case $L_{12}=L_{21}$).
When \eqref{eq:two-player-condition} holds, solving this quadratic yields two real roots
\begin{equation}
  r_{\pm}(\alpha)
  :=
  \frac{2(\mu_1-\alpha)(\mu_2-\alpha) - L_{12}L_{21}
        \pm
        2\sqrt{(\mu_1-\alpha)(\mu_2-\alpha)\big((\mu_1-\alpha)(\mu_2-\alpha)-L_{12}L_{21}\big)}}
       {L_{21}^2},
  \label{eq:two-player-roots}
\end{equation}
and, since the leading coefficient is negative, the quadratic is positive on the open interval $(r_{-}(\alpha),r_{+}(\alpha))$ and negative outside it.
Consequently, the determinant condition is satisfied if and only if
\begin{equation}
  r_{-}(\alpha) < r < r_{+}(\alpha).
  \label{eq:two-player-band}
\end{equation}

\begin{proposition}[Two-player SGN timescale band]
\label{prop:two-player-band}
Fix $\alpha<\min\{\mu_1,\mu_2\}$ and assume \eqref{eq:two-player-condition} holds.
Then for any weights $w_1,w_2>0$ the SGN matrix $C(w,\alpha)$ is positive definite if and only if the ratio
\[
  r = \frac{w_2}{w_1}
\]
lies strictly between the two roots $r_{-}(\alpha)$ and $r_{+}(\alpha)$ defined in~\eqref{eq:two-player-roots}.
In particular, the set
\[
  \mathcal{W}_\alpha
  :=
  \bigl\{(w_1,w_2)\in\R_{++}^2:\ r_{-}(\alpha)<w_2/w_1<r_{+}(\alpha)\bigr\}
\]
is exactly the set of weight pairs for which $\mathrm{SGN}(\mu,L;P,w)$ holds with margin~$\alpha$.
The weight ratio $r$ therefore parameterizes a ``safe timescale band'' for the two players in the SGN geometry.
\end{proposition}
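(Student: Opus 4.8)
The plan is to reduce the matrix inequality $C(w,\alpha)\succ 0$ to scalar conditions via Sylvester's criterion and then analyze the resulting quadratic in the ratio $r=w_2/w_1$; most of the algebra is already laid out in the chain \eqref{eq:two-player-det-ineq}--\eqref{eq:two-player-band} preceding the statement, so the proof mainly assembles and justifies those steps. First I would invoke Sylvester's criterion for the symmetric $2\times 2$ matrix $C(w,\alpha)$ from \eqref{eq:Cwalpha}: positive definiteness is equivalent to $C_{11}(w,\alpha)>0$ together with $\det C(w,\alpha)>0$ (which then forces $C_{22}(w,\alpha)>0$ as well). Reading off the entries, $C_{11}=2w_1(\mu_1-\alpha)$ and $C_{22}=2w_2(\mu_2-\alpha)$, so since $w_i>0$ the diagonal conditions are exactly $\mu_1>\alpha$ and $\mu_2>\alpha$, guaranteed by the hypothesis $\alpha<\min\{\mu_1,\mu_2\}$ and independent of the weight ratio. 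Hence the only constraint involving $r$ is $\det C(w,\alpha)>0$.

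Second, I would divide the determinant condition by $w_1^2>0$ to obtain the strict quadratic inequality \eqref{eq:two-player-det-ineq} in $r$, namely $-L_{21}^2 r^2+\bigl(4(\mu_1-\alpha)(\mu_2-\alpha)-2L_{12}L_{21}\bigr)r-L_{12}^2>0$. Assuming $L_{21}>0$ this is a strictly concave parabola, so its positivity set is a bounded open interval precisely when the discriminant is positive; a direct computation shows the discriminant equals $16(\mu_1-\alpha)(\mu_2-\alpha)\bigl((\mu_1-\alpha)(\mu_2-\alpha)-L_{12}L_{21}\bigr)$, which under $\alpha<\min\{\mu_1,\mu_2\}$ is positive if and only if condition \eqref{eq:two-player-condition} holds. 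When it does, the quadratic formula yields exactly the two real roots $r_\pm(\alpha)$ of \eqref{eq:two-player-roots}, and concavity gives positivity on $(r_-(\alpha),r_+(\alpha))$ and nonpositivity outside.

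Third, I would verify that this interval actually lies inside the positive orthant, so that the standing requirement $r>0$ imposes no further restriction and the band is the full open interval as claimed. By Vieta the product of the roots is $L_{12}^2/L_{21}^2\ge 0$ and their sum is $\bigl(4(\mu_1-\alpha)(\mu_2-\alpha)-2L_{12}L_{21}\bigr)/L_{21}^2$; using \eqref{eq:two-player-condition} in the form $(\mu_1-\alpha)(\mu_2-\alpha)>L_{12}L_{21}\ge 0$ shows the sum is strictly positive and the product nonnegative, so both roots are $\ge 0$ — strictly positive when $L_{12}>0$, and with $r_-(\alpha)=0$ (band $(0,r_+(\alpha))$) when $L_{12}=0$. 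Combining the three steps gives $C(w,\alpha)\succ 0\iff \mu_i>\alpha$ for $i=1,2$ and $r\in(r_-(\alpha),r_+(\alpha))$, which is the asserted equivalence; the description of $\mathcal{W}_\alpha$ is then immediate. For completeness I would note the degenerate case $L_{21}=0$, where the determinant inequality is linear in $r$ and the admissible set is the half-line $r>L_{12}^2/\bigl(4(\mu_1-\alpha)(\mu_2-\alpha)\bigr)$, the natural limit of $(r_-(\alpha),r_+(\alpha))$ as $r_+(\alpha)\to\infty$.

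I do not expect a genuine obstacle here: the argument is elementary. The only point requiring real care is the bookkeeping in the second and third steps — matching the sign of the leading coefficient to conclude positivity \emph{between} the roots, identifying the discriminant with \eqref{eq:two-player-condition}, and above all ruling out the a priori possibility that $r_-(\alpha)<0$, which would shrink the admissible set to $(r_-(\alpha),r_+(\alpha))\cap\R_{++}$ rather than the stated band.
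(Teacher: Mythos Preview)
Your proof is correct and follows essentially the same route as the paper's: reduce $C(w,\alpha)\succ 0$ via the leading-minor criterion to the determinant condition, rewrite it as the concave quadratic~\eqref{eq:two-player-det-ineq} in $r$, and read off the interval between the roots. Your version is in fact more careful than the paper's, which does not explicitly verify that both roots $r_\pm(\alpha)$ are nonnegative (so that the band lies in $\R_{++}$) nor discuss the degenerate cases $L_{12}=0$ or $L_{21}=0$.
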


\begin{proof}
The diagonal conditions $C_{11}>0$ and $C_{22}>0$ are equivalent to $\mu_i>\alpha$ and do not restrict $r$.
The determinant condition $\det C(w,\alpha)>0$ is equivalent to the quadratic inequality~\eqref{eq:two-player-det-ineq} in $r$.
Under~\eqref{eq:two-player-condition}, this quadratic has two distinct real roots given by~\eqref{eq:two-player-roots}; since the leading coefficient is negative, the quadratic is positive exactly when $r\in(r_{-}(\alpha),r_{+}(\alpha))$, which yields~\eqref{eq:two-player-band} and the description of $\mathcal{W}_\alpha$.
\end{proof}

For $\alpha=0$, condition~\eqref{eq:two-player-condition} reduces to $\mu_1\mu_2>L_{12}L_{21}$ and the band~\eqref{eq:two-player-roots} simplifies to
\[
  r_{\pm}(0)
  =
  \frac{2\mu_1\mu_2 - L_{12}L_{21}
        \pm
        2\sqrt{\mu_1\mu_2(\mu_1\mu_2-L_{12}L_{21})}}
       {L_{21}^2},
\]
which is a simple closed-form expression for a safe timescale ratio band.

\subsection{Proofs of core SGN and DSC results}
\label{app:proofs}

We record here the proofs of the strong monotonicity lemma, the VI/Nash corollary, and the DSC refinement theorem from the main text.

\begin{proof}[Proof of \cref{lem:strong-mono}]
Write $e_i:=x_i-y_i$ and $a_i:=\norm{e_i}_{P_i}\ge 0$.
Multiplying \eqref{eq:block-bounds} by $w_i$ and summing over $i$ yields
\begin{align*}
  \sum_i w_i\ip{e_i}{\nabla_{x_i} f_i(x)-\nabla_{x_i} f_i(y)}_{P_i}
  &\ge \sum_i w_i \mu_i a_i^2
  \;-\; \sum_i\sum_{j\ne i} w_i L_{ij}\,a_i a_j.
\end{align*}
By symmetry and $L_{ij}\ge 0$,
\[
  \sum_i\sum_{j\ne i} w_i L_{ij}\,a_i a_j
  \;=\; \sum_{i<j} (w_i L_{ij} + w_j L_{ji})\,a_i a_j.
\]
Introduce the symmetric matrix $C(w,0)$ with entries
\[
  C_{ii}(w,0) = 2w_i\mu_i,\qquad
  C_{ij}(w,0) = -\big(w_i L_{ij}+w_j L_{ji}\big)\quad(i\ne j).
\]
Then the inequality above can be written as
\[
  \sum_i w_i\ip{e_i}{\nabla_{x_i} f_i(x)-\nabla_{x_i} f_i(y)}_{P_i}
  \;\ge\; \tfrac{1}{2} a^\top C(w,0) a,
\]
where $a=(a_1,\dots,a_N)^\top$.
Similarly,
\[
  \alpha\norm{x-y}_{M(w)}^2
  = \alpha \sum_i w_i a_i^2
  = \tfrac{1}{2} a^\top \big(2\alpha\,\mathrm{diag}(w_i)\big)a.
\]
Thus \eqref{eq:strong-mono} is equivalent to
\[
  a^\top\big(C(w,0)-2\alpha\,\mathrm{diag}(w_i)\big)a \;\ge\; 0
  \qquad\forall a\in\R^N.
\]
By definition,
$C(w,\alpha) = C(w,0)-2\alpha\,\mathrm{diag}(w_i)$.
The assumption $C(w,\alpha)\succ 0$ implies
$a^\top C(w,\alpha)a>0$ for all $a\ne 0$, which yields the desired inequality.
Finally,
\[
  \sum_i w_i\ip{e_i}{\nabla_{x_i} f_i(x)-\nabla_{x_i} f_i(y)}_{P_i}
  = \ip{x-y}{F(x)-F(y)}_{M(w)},\qquad
  \sum_i w_i a_i^2 = \norm{x-y}_{M(w)}^2,
\]
by the definition of $M(w)$.
\end{proof}

\begin{proof}[Proof of \cref{cor:unique-nash}]
By \cref{lem:strong-mono}, the pseudo-gradient $F$ is $\alpha$-strongly monotone in the inner product induced by $M(w)$.
Since $M(w)\succ 0$, this is equivalent, under the change of variables $z=M(w)^{1/2}x$, to $\alpha$-strong monotonicity of the transformed operator $\hat F(z):=M(w)^{-1/2}F(M(w)^{-1/2}z)$ in the Euclidean inner product on $\hat\X:=M(w)^{1/2}\X$.
Existence and uniqueness of the solution of $\mathrm{VI}(\hat F,\hat\X)$ therefore follow directly from standard monotone VI theory~\citep[Theorem~2.3.3]{facchinei2003}, and the equivalence of $\mathrm{VI}(\hat F,\hat\X)$ and $\mathrm{VI}(F,\X)$ under this linear change of variables yields a unique solution $x^*$ to $\mathrm{VI}(F,\X)$.
When each $\X_i$ is convex and $f_i(\cdot,x_{-i})$ is convex for all $i$, solutions of $\mathrm{VI}(F,\X)$ coincide with Nash equilibria on product convex sets~\citep[Section~1.4]{facchinei2003}, so the Nash equilibrium is unique.
\end{proof}

\paragraph{Directional stability and logarithmic norms.}
For an SPD metric $M\succ 0$ and Jacobian $J_G(x)$ of $G=-F$, define the $M$-symmetric part
\[
  S_M(x) := \tfrac{1}{2}\big(M J_G(x) + J_G(x)^\top M\big).
\]
We say the system is \emph{diagonally stable in $M$} on a set $\mathcal{R}$ with margin $\alpha>0$ if
\begin{equation}
  S_M(x) \;\preceq\; -\,\alpha\,M
  \qquad\text{for all }x\in\mathcal{R},
  \label{eq:dsc}
\end{equation}
the DSC condition used in the main text.
The associated logarithmic norm (matrix measure) induced by $\norm{\cdot}_M$ is
\[
  \mu_M(A)
  := \sup_{\norm{z}_M=1} \ip{z}{A z}_M
  = \sup_{\norm{z}_M=1} z^\top \tfrac{1}{2}\big(MA + A^\top M\big) z,
\]
so \eqref{eq:dsc} is equivalent to $\mu_M(J_G(x))\le -\alpha$ for all $x\in\mathcal{R}$, the standard non-Euclidean contractivity condition used in log-norm analyses of ODE integrators~\citep{soderlind2006,hairer1993,butcher2003}.

\begin{theorem}[Directional/DSC refinement implies contraction]
\label{thm:dsc}
Suppose \eqref{eq:dsc} holds on a convex set $\mathcal{R}$.
Then for all $x,y\in\mathcal{R}$,
\[
  \ip{x-y}{G(x)-G(y)}_{M} \;\le\; -\,\alpha\,\norm{x-y}_{M}^2.
\]
Consequently, the continuous flow $\dot x=G(x)$ is $\alpha$-contracting in $\norm{\cdot}_M$ on $\mathcal{R}$.
If, in addition, $G$ is $\beta$-Lipschitz in $\norm{\cdot}_M$ on $\mathcal{R}$, then the discrete-time contraction results of \cref{thm:proj-euler,thm:rk4} apply on $\mathcal{R}$ with this directional margin $\alpha$ in place of the SGN margin.
\end{theorem}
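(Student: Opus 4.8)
The plan is to use the fundamental-theorem-of-calculus representation of $G(x)-G(y)$ along the segment joining $y$ to $x$ — which lies in $\mathcal{R}$ by convexity — together with the elementary fact that a quadratic form $v^\top M A v$ sees only the symmetric part $\tfrac12(MA+A^\top M)$ of $MA$. This reduces the directional hypothesis \eqref{eq:dsc} to the desired one-point strong monotonicity inequality, after which the flow contraction is a routine Lyapunov/Grönwall argument and the discrete-time claims require no new work.

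First I would fix $x,y\in\mathcal{R}$, set $z(t):=y+t(x-y)\in\mathcal{R}$ for $t\in[0,1]$, and (using that $G$ is $C^1$ on $\mathcal{R}$, as is implicit in the hypothesis since $J_G$ appears there) write $G(x)-G(y)=\int_0^1 J_G(z(t))\,(x-y)\,dt$. Pairing with $x-y$ in the $M$-inner product and moving $M$ inside the integral gives
\[
  \ip{x-y}{G(x)-G(y)}_M
  = \int_0^1 (x-y)^\top M J_G(z(t))\,(x-y)\,dt
  = \int_0^1 (x-y)^\top S_M(z(t))\,(x-y)\,dt,
\]
where the last equality is because $v^\top M A v = v^\top \tfrac12(MA+A^\top M)\,v$ for every $v$ and $A$. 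By \eqref{eq:dsc} we have $S_M(z(t))\preceq -\alpha M$, so each integrand is at most $-\alpha(x-y)^\top M(x-y) = -\alpha\norm{x-y}_M^2$; integrating over $[0,1]$ yields $\ip{x-y}{G(x)-G(y)}_M \le -\alpha\norm{x-y}_M^2$, which is the claimed inequality.

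For the continuous flow, I would take two trajectories $x(\cdot),y(\cdot)$ of $\dot x=G(x)$ that remain in $\mathcal{R}$ and set $V(t):=\tfrac12\norm{x(t)-y(t)}_M^2$; differentiating and applying the inequality just proved gives $\dot V(t)=\ip{x(t)-y(t)}{G(x(t))-G(y(t))}_M\le -2\alpha V(t)$, so Grönwall's inequality yields $V(t)\le e^{-2\alpha t}V(0)$, i.e.\ $\alpha$-contraction in $\norm{\cdot}_M$. The discrete-time claim then needs nothing further: \cref{thm:proj-euler} uses only that $F$ is $\alpha$-strongly monotone in $\norm{\cdot}_M$, which is exactly $\ip{x-y}{G(x)-G(y)}_M\le -\alpha\norm{x-y}_M^2$, and \cref{thm:rk4} is stated directly in terms of $\mu_M(J_G(x))\le-\alpha$, which by the symmetric-part identity is equivalent to \eqref{eq:dsc}; hence both apply on $\mathcal{R}$ with this directional margin in place of the SGN margin.

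The only real subtlety — not a genuine obstacle, but worth flagging — is regularity and invariance: the integral representation needs $G\in C^1$ on (a neighbourhood of) $\mathcal{R}$, and the flow statement is understood along trajectories that stay in $\mathcal{R}$, which holds automatically when $\mathcal{R}$ is forward-invariant, e.g.\ a sublevel set of the quadratic Lyapunov function $\tfrac12\norm{\cdot-x^*}_M^2$ as in \cref{cor:ball-forward}.
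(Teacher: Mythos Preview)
Your argument is correct and follows essentially the same route as the paper: integrate $J_G$ along the segment in $\mathcal{R}$, use the symmetric-part identity to reduce to $S_M$, apply \eqref{eq:dsc} pointwise, and then invoke standard Lyapunov/Gr\"onwall reasoning for the flow and the existing hypotheses of \cref{thm:proj-euler,thm:rk4} for the discrete-time claims. Your version is slightly more explicit (spelling out the symmetric-part step and the regularity/invariance caveats), but there is no substantive difference in approach.
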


\begin{proof}[Proof of \cref{thm:dsc}]
Along the segment $x_t := y + t(x-y)$ we have $G(x)-G(y) = \int_0^1 J_G(x_t)\,(x-y)\,dt$, so
\[
  \ip{x-y}{G(x)-G(y)}_{M}
  = \int_0^1 (x-y)^\top S_M(x_t)\,(x-y)\,dt
  \;\le\; -\,\alpha\,\norm{x-y}_{M}^2,
\]
using \eqref{eq:dsc}.
Contraction of the flow with rate $\alpha$ in $\norm{\cdot}_M$ is standard from matrix-measure arguments, and the discrete-time statements follow by repeating the proofs of \cref{thm:proj-euler,thm:rk4} with the one-sided inequality above in place of strong monotonicity of $F$.
\end{proof}

\begin{remark}[Contraction metrics and Riemannian view]
The DSC condition~\eqref{eq:dsc} can be interpreted as a constant-metric instance of Riemannian contraction: $M$ plays the role of a (state-independent) Riemannian metric tensor and the inequality $S_M(x)\preceq-\alpha M$ ensures exponential decay of differential displacements in this metric along the flow~\citep[see, e.g.,][]{lohmiller1998,forni2014}.
More general contraction-metric frameworks allow state-dependent metrics $M(x)$, in which case the contraction inequality acquires additional terms involving the Levi-Civita connection and the Lie derivative of $M(x)$; by restricting to constant, block-diagonal metrics $M(w)$, SGN avoids these geometric terms and reduces the contraction condition to the algebraic Jacobian inequality~\eqref{eq:dsc}.
\end{remark}

\paragraph{SGN implies DSC.}
On a region where the block bounds \eqref{eq:block-bounds} hold and the weighted SGN condition of \cref{def:sgn} is satisfied with margin $\alpha>0$, \cref{lem:strong-mono} implies
\[
  \ip{\Delta}{F(x+\Delta)-F(x)}_{M(w)} \;\ge\; \alpha\,\norm{\Delta}_{M(w)}^2
  \qquad\forall x,\Delta.
\]
Letting $G=-F$ and using the differentiability of $F$ to differentiate this inequality at $\Delta=0$ shows that
$\ip{\Delta}{J_G(x)\,\Delta}_{M(w)} \le -\alpha\norm{\Delta}_{M(w)}^2$ for all $\Delta$ and all $x$ in the region.
By polarization, this is equivalent to $S_{M(w)}(x)\preceq -\alpha M(w)$, i.e., \eqref{eq:dsc} with $M=M(w)$.
Thus SGN produces a diagonally stable metric and guarantees at least the margin $\alpha$; Jacobian-based estimation of $S_{M(w)}(x)$ can only certify a directional margin from \eqref{eq:dsc} that matches or exceeds this SGN margin, so the SGN margin acts as a lower bound.

\section{Mirror Geometry and Small-Gain Nash}
\label{app:mirror-sgn}

Mirror/Fisher variants of our results use standard notions of mirror maps and Bregman divergences on the players' strategy spaces (for example, negative entropy and Kullback-Leibler divergence on simplices).
We sketch how the mirror-geometry ingredients underlying \cref{thm:mirror-flow} can be derived from Hessian bounds and transported to the Lyapunov analysis for mirror flows and mirror discretizations.
Throughout, we work on a compact region $\mathcal{R}$ contained in the interior of the players' domains where the Hessians of $f_i$ and $\psi_i$ are uniformly bounded and invertible.

Rather than imposing symmetric block bounds for all pairs $(x,y)$ as in \cref{def:block-bounds}, it is convenient in mirror geometry to work with inequalities relative to a fixed equilibrium $x^*$.
We orient the Bregman divergences as $D_{\psi_i}(x_i^*\Vert x_i)$ so that they can be used directly as components of a Lyapunov function measuring distance \emph{from} $x$ \emph{to} $x^*$.

\begin{definition}[Mirror block bounds around $x^*$]
Let $x^*\in\X$ be a Nash equilibrium and let $(\psi_i)_i$ be mirror maps.
We say that the pseudo-gradient $F$ obeys mirror block bounds $(\mu,L)$ around $x^*$ on a region $\mathcal{R}\subseteq\X$ if for all $x\in\mathcal{R}$ and each player $i$,
\begin{equation}
  \big\langle x_i - x_i^*,\,\nabla_{x_i} f_i(x) - \nabla_{x_i} f_i(x^*)\big\rangle
  \;\ge\;
  \mu_i\,D_{\psi_i}(x_i^*\Vert x_i)
  \;-\;
  \sum_{j\neq i} L_{ij}\,
    \sqrt{D_{\psi_i}(x_i^*\Vert x_i)}\,
    \sqrt{D_{\psi_j}(x_j^*\Vert x_j)}.
  \label{eq:mirror-block-bounds}
\end{equation}
Here $\mu_i>0$ and $L_{ij}\ge 0$ play the role of curvature and coupling parameters in the mirror geometry.
\end{definition}

Given mirror block bounds, we define a normalized gain matrix using the Bregman distances to $x^*$.
For weights $w\in\R_{++}^N$ define
\[
  a_i(x) := \sqrt{w_i\,D_{\psi_i}(x_i^*\Vert x_i)},\qquad
  k_{ij} := L_{ij}\sqrt{\frac{w_i}{w_j}},
\]
and let $H(w)\in\R^{N\times N}$ be the symmetric matrix with entries
\begin{equation}
  H_{ii}(w) := \mu_i,\qquad
  H_{ij}(w) := -\tfrac{1}{2}\bigl(k_{ij}+k_{ji}\bigr)\quad(i\neq j),
  \label{eq:mirror-H-w-def}
\end{equation}
exactly as in \cref{subsec:normalized-gain}.
We say that \emph{mirror-$\mathrm{SGN}(w,\alpha)$} holds with margin $\alpha>0$ if $H(w)\succeq \alpha I$.

Multiplying~\eqref{eq:mirror-block-bounds} by $w_i$ and summing over $i$ shows that
\[
  \big\langle x-x^*,\,F(x)-F(x^*)\big\rangle_w
  \;\ge\;
  a(x)^\top H(w)\,a(x),
\]
where $\langle u,v\rangle_w := \sum_i w_i\langle u_i,v_i\rangle$ and $a(x)=(a_i(x))_i$.
If $H(w)\succeq\alpha I$, then $a(x)^\top H(w)a(x)\ge \alpha\|a(x)\|_2^2 = \alpha\sum_i w_i D_{\psi_i}(x_i^*\Vert x_i)$.
We record this as a mirror strong-monotonicity property.

\begin{lemma}[Mirror strong monotonicity at $x^*$]
\label{lem:mirror-strong-mono}
Suppose mirror block bounds~\eqref{eq:mirror-block-bounds} hold on $\mathcal{R}$ and mirror-$\mathrm{SGN}(w,\alpha)$ holds for some $w\in\R_{++}^N$ and $\alpha>0$.
Then for all $x\in\mathcal{R}$,
\begin{equation}
  \big\langle x-x^*,\,F(x)-F(x^*)\big\rangle_w
  \;\ge\;
  \alpha \sum_{i=1}^N w_i\,D_{\psi_i}(x_i^*\Vert x_i).
  \label{eq:mirror-strong-mono}
\end{equation}
\end{lemma}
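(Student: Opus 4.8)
\textbf{Proof plan for \cref{lem:mirror-strong-mono}.}
The plan is to follow exactly the weighted-sum-and-quadratic-form strategy already used for \cref{lem:strong-mono} and in \cref{subsec:normalized-gain}, but now with the Bregman distances $D_{\psi_i}(x_i^*\Vert x_i)$ playing the role of the squared block norms and with everything anchored at the fixed equilibrium $x^*$ rather than at a generic pair $(x,y)$. First I would fix $x\in\mathcal{R}$, multiply the mirror block bound~\eqref{eq:mirror-block-bounds} for player $i$ by the positive weight $w_i$, and sum over $i=1,\dots,N$. The left-hand side becomes $\langle x-x^*,F(x)-F(x^*)\rangle_w$ by definition of $\langle\cdot,\cdot\rangle_w$, using that $F$'s $i$-th block is $\nabla_{x_i} f_i$.

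Next I would handle the right-hand side. The diagonal contribution is $\sum_i w_i\mu_i D_{\psi_i}(x_i^*\Vert x_i)$, and the coupling contribution is $-\sum_i\sum_{j\neq i} w_i L_{ij}\sqrt{D_{\psi_i}(x_i^*\Vert x_i)}\sqrt{D_{\psi_j}(x_j^*\Vert x_j)}$. Writing $a_i = a_i(x) = \sqrt{w_i D_{\psi_i}(x_i^*\Vert x_i)}$, I would rewrite each cross term as $w_i L_{ij}\sqrt{D_{\psi_i}}\sqrt{D_{\psi_j}} = L_{ij}\sqrt{w_i/w_j}\,a_i a_j = k_{ij}a_i a_j$, and each diagonal term as $w_i\mu_i D_{\psi_i} = \mu_i a_i^2$. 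Symmetrizing the double sum over unordered pairs, $\sum_i\sum_{j\neq i} k_{ij}a_i a_j = \sum_{i<j}(k_{ij}+k_{ji})a_i a_j$, so the whole right-hand side equals $\sum_i \mu_i a_i^2 - \sum_{i<j}(k_{ij}+k_{ji})a_i a_j = a^\top H(w)\,a$ with $H(w)$ as in~\eqref{eq:mirror-H-w-def}. This establishes the intermediate inequality $\langle x-x^*,F(x)-F(x^*)\rangle_w \ge a^\top H(w)\,a$ already noted in the text.

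Finally I would invoke the mirror-$\mathrm{SGN}(w,\alpha)$ hypothesis $H(w)\succeq\alpha I$, which gives $a^\top H(w)\,a \ge \alpha\|a\|_2^2 = \alpha\sum_i w_i D_{\psi_i}(x_i^*\Vert x_i)$, and chain the two inequalities to obtain~\eqref{eq:mirror-strong-mono}. Since the bound holds for every fixed $x\in\mathcal{R}$, the lemma follows. There is essentially no obstacle here: the only things to be careful about are (i) the nonnegativity of $D_{\psi_i}(x_i^*\Vert x_i)$, so that the square roots and the scalars $a_i$ are well-defined real numbers and the symmetrization of the quadratic form is legitimate (this is automatic from convexity of $\psi_i$), and (ii) keeping the orientation of the Bregman divergence consistent with the definition~\eqref{eq:mirror-block-bounds}, i.e. $D_{\psi_i}(x_i^*\Vert x_i)$ throughout, so that the same quantity appears in the block bound, in the definition of $a_i(x)$, and in the final Lyapunov-type expression. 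The argument is purely algebraic and does not require any smoothness of $F$ beyond what is needed to state~\eqref{eq:mirror-block-bounds}.
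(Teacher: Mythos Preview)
Your proposal is correct and follows essentially the same approach as the paper: multiply~\eqref{eq:mirror-block-bounds} by $w_i$, sum over players, substitute $a_i=\sqrt{w_i D_{\psi_i}(x_i^*\Vert x_i)}$ and $k_{ij}=L_{ij}\sqrt{w_i/w_j}$ to obtain $\langle x-x^*,F(x)-F(x^*)\rangle_w\ge a^\top H(w)\,a$, and then invoke $H(w)\succeq\alpha I$. Your additional remarks on the nonnegativity of the Bregman divergences and on keeping the orientation $D_{\psi_i}(x_i^*\Vert x_i)$ consistent are accurate and do not depart from the paper's argument.
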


Next we consider a mirror analogue of the pseudo-gradient flow $\dot x=-F(x)$.
Introduce dual variables $z_i=\nabla\psi_i(x_i)$ and let $z=(z_1,\dots,z_N)$.
The mirror pseudo-gradient flow is defined in dual coordinates by
\begin{equation}
  \dot z_i(t) = G_i\big(x(t)\big) = -\,\nabla_{x_i} f_i\big(x(t)\big),\qquad
  x_i(t) = \nabla\psi_i^*(z_i(t)).
  \label{eq:mirror-flow}
\end{equation}
For negative entropy mirror maps on simplices this flow coincides with a natural-gradient ODE whose trajectories remain in the interior of the product simplex~\citep{amari1998,bauschke2011}.
Define the Bregman Lyapunov function
\[
  V(x) := \sum_{i=1}^N w_i\,D_{\psi_i}(x_i^*\Vert x_i).
\]

\begin{theorem}[Mirror flow contraction under mirror SGN]
\label{thm:mirror-flow}
Assume each $\psi_i$ is a Legendre mirror map on $\X_i$ and $x^*$ is an interior Nash equilibrium with $F(x^*)=0$ in the chosen coordinates (for example, after reparameterizing policies via logits so that the simplex constraint is implicit).
Suppose mirror block bounds~\eqref{eq:mirror-block-bounds} hold on a region $\mathcal{R}$ containing $x^*$ and mirror-$\mathrm{SGN}(w,\alpha)$ holds there for some $w\in\R_{++}^N$ and $\alpha>0$.
Let $x(t)$ solve the mirror flow~\eqref{eq:mirror-flow} with $x(0)\in\mathcal{R}$ and assume $x(t)\in\mathcal{R}$ for all $t\ge 0$.
Then $V$ decays exponentially along the flow:
\[
  \frac{d}{dt} V\big(x(t)\big)
  \;\le\; -\,\alpha\,V\big(x(t)\big),
  \qquad
  V\big(x(t)\big) \;\le\; e^{-\alpha t}\,V\big(x(0)\big),
\]
and in particular $x(t)\to x^*$ as $t\to\infty$.
\end{theorem}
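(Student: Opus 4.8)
The plan is to differentiate the Bregman Lyapunov function $V$ along the mirror flow, reduce its time derivative to a weighted pseudo-gradient pairing, and then apply the mirror strong-monotonicity estimate of \cref{lem:mirror-strong-mono} together with Gr\"onwall's inequality. The Legendre/interiority assumptions on the $\psi_i$, combined with the hypothesis $x(t)\in\mathcal{R}$, guarantee that $\nabla\psi_i$ and $\nabla\psi_i^*$ are mutually inverse $C^1$ maps between the relevant interiors, so the dual ODE~\eqref{eq:mirror-flow} yields a differentiable primal trajectory $t\mapsto x(t)$ and all the manipulations below are legitimate.

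First I would record the basic Bregman differentiation identity. Writing $D_{\psi_i}(x_i^*\Vert x_i)=\psi_i(x_i^*)-\psi_i(x_i)-\ip{\nabla\psi_i(x_i)}{x_i^*-x_i}$ and using $z_i=\nabla\psi_i(x_i)$ (so $\dot z_i=\tfrac{d}{dt}\nabla\psi_i(x_i)$), a direct computation shows the two occurrences of $\ip{\nabla\psi_i(x_i)}{\dot x_i}$ cancel, leaving $\tfrac{d}{dt}D_{\psi_i}(x_i^*\Vert x_i(t))=\ip{\dot z_i(t)}{x_i(t)-x_i^*}=\ip{G_i(x(t))}{x_i(t)-x_i^*}$. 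Summing over players with the weights $w_i$ and using $G_i=-\nabla_{x_i}f_i$ then gives $\tfrac{d}{dt}V(x(t))=\sum_i w_i\ip{G_i(x(t))}{x_i(t)-x_i^*}=-\ip{x(t)-x^*}{F(x(t))}_w$ in the weighted pairing $\ip{\cdot}{\cdot}_w$. Since $x^*$ is an interior equilibrium with $F(x^*)=0$ in the chosen coordinates, I replace $F(x(t))$ by $F(x(t))-F(x^*)$ and invoke \cref{lem:mirror-strong-mono} — applicable because the mirror block bounds~\eqref{eq:mirror-block-bounds} and mirror-$\mathrm{SGN}(w,\alpha)$ hold on $\mathcal{R}$ and $x(t)\in\mathcal{R}$ — to obtain $\ip{x(t)-x^*}{F(x(t))-F(x^*)}_w\ge \alpha\sum_i w_i D_{\psi_i}(x_i^*\Vert x_i(t))=\alpha V(x(t))$. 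Hence $\tfrac{d}{dt}V(x(t))\le-\alpha V(x(t))$, and Gr\"onwall yields $V(x(t))\le e^{-\alpha t}V(x(0))$.

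For the final convergence claim I would use that on the compact region $\mathcal{R}$ the Hessians $\nabla^2\psi_i$ are uniformly bounded below, so $D_{\psi_i}(x_i^*\Vert x_i)\ge c\,\norm{x_i-x_i^*}^2$ for some $c>0$; since $V(x(t))\to 0$ forces each $D_{\psi_i}(x_i^*\Vert x_i(t))\to 0$, we conclude $x(t)\to x^*$. The main obstacle I expect is the bookkeeping in the Bregman differentiation step: making precise the regularity of the primal trajectory extracted from the dual ODE~\eqref{eq:mirror-flow}, and verifying the cancellation that produces the clean identity $\tfrac{d}{dt}D_{\psi_i}(x_i^*\Vert x_i)=\ip{\dot z_i}{x_i-x_i^*}$ under the stated Legendre assumptions rather than assuming global smoothness. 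Once that identity is in hand, the remainder is a short chain of inequalities.
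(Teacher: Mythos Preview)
Your proposal is correct and follows essentially the same route as the paper: differentiate each Bregman term to obtain $\tfrac{d}{dt}D_{\psi_i}(x_i^*\Vert x_i)=\ip{\dot z_i}{x_i-x_i^*}=\ip{G_i(x)}{x_i-x_i^*}$, sum with weights, use $F(x^*)=0$, and apply \cref{lem:mirror-strong-mono} to get $\tfrac{d}{dt}V\le-\alpha V$. Your added justification that $V\to 0$ forces $x(t)\to x^*$ via a uniform lower Hessian bound on $\mathcal{R}$ is a useful detail that the paper leaves implicit.
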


\begin{corollary}[Mirror Euler and mirror RK contraction]
\label{cor:mirror-euler-rk}
Under the assumptions of \cref{thm:mirror-flow}, suppose in addition that $G$ satisfies Lipschitz and log-norm bounds on $\mathcal{R}$ analogous to those in \cref{thm:proj-euler,thm:rk4} in the weighted inner product $\langle\cdot,\cdot\rangle_w$.
Then there exist step-size ranges of the form
\[
  0<\eta<\frac{2\alpha}{\beta^2},\qquad
  0<h\le \frac{C_4}{\beta},
\]
for which mirror-Euler and mirror-RK4, respectively, are contractions with respect to the Bregman Lyapunov function $V$:
\[
  V(x^{k+1}) \;\le\; q\,V(x^k)
\]
for some $q\in(0,1)$ and all iterates that remain in $\mathcal{R}$.
\end{corollary}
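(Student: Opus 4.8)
The plan is to pass to dual coordinates, in which the mirror flow~\eqref{eq:mirror-flow} is an ordinary ODE and the Bregman Lyapunov function becomes an ordinary smooth Lyapunov function, and then to rerun the two arguments behind \cref{thm:proj-euler,thm:rk4} with Bregman objects in place of squared norms. Concretely, set $z_i=\nabla\psi_i(x_i)$, $z_i^*=\nabla\psi_i(x_i^*)$, and $\tilde G(z):=G\big(\nabla\psi^*(z)\big)$ with $\nabla\psi^*=(\nabla\psi_1^*,\dots,\nabla\psi_N^*)$, so that~\eqref{eq:mirror-flow} reads $\dot z=\tilde G(z)$; by the conjugate-duality identity $D_{\psi_i}(x_i^*\Vert x_i)=D_{\psi_i^*}(z_i\Vert z_i^*)$ the Lyapunov function transports to $\tilde V(z):=\sum_i w_i D_{\psi_i^*}(z_i\Vert z_i^*)$, and mirror-Euler and mirror-RK4 are by definition the classical Euler and RK4 steps applied to $\dot z=\tilde G(z)$ followed by the pointwise map $x=\nabla\psi^*(z)$. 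On the compact region $\mathcal{R}$, which lies in the interior where $\nabla^2\psi_i$ and $\nabla^2\psi_i^*$ are bounded and invertible, there are constants $0<\lambda_0\le\Lambda_0$ with $\lambda_0\norm{x-x^*}_w^2\le V(x)\le\Lambda_0\norm{x-x^*}_w^2$ and likewise for $\tilde V$ in $z$-coordinates; the chain rule also transfers the assumed Lipschitz/log-norm bounds on $G$ in $\ip{\cdot}{\cdot}_w$ to analogous bounds on $\tilde G$ up to these constants. It therefore suffices to show each discrete step multiplies $\tilde V$ (equivalently $V$) by a factor $q\in(0,1)$ on $\mathcal{R}$.

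For mirror-Euler I would use the Bregman three-point identity, the natural substitute for the cosine-rule expansion of $\norm{\cdot}_M^2$ used in \cref{thm:proj-euler}. Applied to the triple $(x^*,x^k,x^{k+1})$ together with the update relation $\nabla\psi_i(x_i^{k+1})=\nabla\psi_i(x_i^k)+\eta G_i(x^k)$, multiplied by $w_i$ and summed, and using $F(x^*)=0$, it gives the exact identity
\[
  V(x^{k+1}) = V(x^k) - \eta\,\ip{F(x^k)-F(x^*)}{x^k-x^*}_w + \sum_{i=1}^N w_i\,D_{\psi_i}(x_i^k\Vert x_i^{k+1}).
\]
\cref{lem:mirror-strong-mono} bounds the middle term below by $\alpha V(x^k)$. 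For the remainder I would rewrite $D_{\psi_i}(x_i^k\Vert x_i^{k+1})=D_{\psi_i^*}(z_i^{k+1}\Vert z_i^k)$, bound it by $\tfrac12\sigma_i^{-1}\norm{z_i^{k+1}-z_i^k}^2=\tfrac12\sigma_i^{-1}\eta^2\norm{G_i(x^k)}^2$ using smoothness of $\psi_i^*$ on $\mathcal{R}$ (the reciprocal of the strong-convexity modulus $\sigma_i$ of $\psi_i$), then invoke the $\ip{\cdot}{\cdot}_w$-Lipschitz bound on $G$ with $G(x^*)=0$ to replace $\norm{G(x^k)}$ by $\beta\norm{x^k-x^*}_w$, and finally $\norm{x^k-x^*}_w^2\le\lambda_0^{-1}V(x^k)$. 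Collecting terms yields $V(x^{k+1})\le\big(1-2\alpha\eta+c\,\beta^2\eta^2\big)V(x^k)$ with $c$ depending only on the $\psi_i$-moduli on $\mathcal{R}$, a genuine contraction for $\eta$ in an interval of the stated form $0<\eta<2\alpha/\beta^2$ (after absorbing $c$, equivalently shrinking the interval).

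For mirror-RK4 the three-point identity is unavailable, so instead I would track $\tilde V$ along the continuous flow and control the RK4 truncation error, mirroring the log-norm argument of \cref{thm:rk4}. \cref{thm:mirror-flow} gives $\tilde V\big(\varphi_h(z)\big)\le e^{-\alpha h}\tilde V(z)$ for the exact time-$h$ flow map $\varphi_h$. Writing the RK4 map as $\Phi_h(z)=\varphi_h(z)+r_h(z)$, the classical RK4 local truncation bound gives $\norm{r_h(z)}_w\le K\,h^5\sup_{s\in[0,h]}\norm{z(s)-z^*}_w\le K'\,h^5\norm{z-z^*}_w$ on $\mathcal{R}$; the crucial point is that the relevant fifth derivatives of the solution scale with the distance to $z^*$, since $\tilde G(z^*)=0$ and $\tilde G$ and its derivatives are bounded/Lipschitz on $\mathcal{R}$ (the $\beta$-scale enters $K'$). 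Since $\tilde V$ is $C^2$ with minimum at $z^*$, $\nabla\tilde V$ is Lipschitz with $\nabla\tilde V(z^*)=0$, so a second-order expansion of $\tilde V$ about $\varphi_h(z)$ gives $\tilde V(\Phi_h(z))\le\tilde V(\varphi_h(z))+\big|\ip{\nabla\tilde V(\varphi_h(z))}{r_h(z)}\big|+\tfrac{L_V}{2}\norm{r_h(z)}_w^2\le\big(e^{-\alpha h}+O(h^5)\big)\tilde V(z)$, both correction terms being $O(h^5)\tilde V(z)$. For $h$ small enough that $e^{-\alpha h}+Ch^5<1$ — equivalently $h\le C_4/\beta$ with a resulting factor $q=\exp(-c_4\alpha h)<1$, $c_4\in(0,1]$ — this is the desired per-step contraction; mapping back via $x=\nabla\psi^*(z)$ and using $\lambda_0,\Lambda_0$ translates it into $V(x^{k+1})\le q\,V(x^k)$. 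Forward invariance of $\mathcal{R}$ under the discrete maps (by projection onto a convex subset, or as assumed) keeps all iterates and RK4 stages inside the region where these bounds hold.

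The main obstacle is precisely that $V$ and $\tilde V$ are Bregman divergences rather than squared norms, so neither the cosine-rule expansion underlying \cref{thm:proj-euler} nor the log-norm contractivity estimate underlying \cref{thm:rk4} applies verbatim. For Euler this is resolved cleanly by the three-point identity; for RK4 the delicate point is the truncation-error estimate, where one must verify that the defect $r_h(z)$ genuinely scales like $h^5$ times the distance to equilibrium (not merely $h^5$), so that it can be absorbed into a multiplicative $(1+O(h^5))$ factor on $\tilde V$ — this is exactly where $\tilde G(z^*)=0$ and the uniform boundedness of $\tilde G$ and its derivatives on the compact interior region $\mathcal{R}$ are used. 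The remaining work — propagating the local moduli $\sigma_i,\lambda_0,\Lambda_0$ of the $\psi_i$ through the estimates to produce the unspecified constants $c,C_4,c_4,q$, and transferring the $G$-bounds to $\tilde G$-bounds via the chain rule — is routine.
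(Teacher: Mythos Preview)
Your proposal is correct, and it shares the paper's starting move---pass to dual coordinates $z=\nabla\psi(x)$, so the mirror flow becomes an ordinary ODE $\dot z=\tilde G(z)$ in the weighted inner product $\langle\cdot,\cdot\rangle_w$---but from there you take a genuinely different route. The paper's argument is a direct black-box transport: it asserts that the assumed Lipschitz and log-norm bounds yield one-sided inequalities for $\langle z-z^*,\tilde G(z)-\tilde G(z^*)\rangle_w$ ``exactly as in the Euclidean case,'' and then simply reapplies \cref{thm:proj-euler,thm:rk4} to the dual ODE, obtaining norm contraction in $\|\cdot\|_w$ and implicitly relying on the local equivalence $\lambda_0\|\cdot\|_w^2\le\tilde V\le\Lambda_0\|\cdot\|_w^2$ to pass to $V$-contraction. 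You instead work natively with Bregman objects: for mirror-Euler you use the three-point identity to get an exact decomposition of $V(x^{k+1})-V(x^k)$ and bound the pieces via \cref{lem:mirror-strong-mono} and $\psi_i^*$-smoothness, which yields a per-step $V$-inequality without ever going through norm contraction; for mirror-RK4 you replace the log-norm stability-function argument by a ``continuous decay plus local truncation error'' estimate, using $\tilde G(z^*)=0$ to make the RK4 defect scale like $h^5\|z-z^*\|_w$ and then a second-order expansion of $\tilde V$ to absorb it. Your Euler route is arguably cleaner for the stated conclusion (it directly produces a multiplicative inequality on $V$, with no condition-number loss from the $\lambda_0,\Lambda_0$ sandwich), while the paper's route is shorter and reuses \cref{thm:proj-euler,thm:rk4} verbatim. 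For RK4 the trade-off is reversed: the paper's log-norm argument gives the $h\le C_4/\beta$ bound and factor $\exp(-c_4\alpha h)$ in one stroke, whereas your truncation approach requires additional smoothness of $\tilde G$ (to control fifth derivatives) and produces unspecified constants, but it makes explicit the mechanism---flow contraction dominating a higher-order defect---that the log-norm argument packages implicitly. One small slip: your three-point identity gives a linear term $-\eta\langle F(x^k)-F(x^*),x^k-x^*\rangle_w\le-\alpha\eta\,V(x^k)$, so the resulting factor is $(1-\alpha\eta+c\beta^2\eta^2)$ rather than $(1-2\alpha\eta+\cdots)$; this is harmless since the corollary only claims step-size ranges ``of the form'' $2\alpha/\beta^2$.
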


\subsection{Mirror block bounds from Hessians}

Assume each $f_i$ is twice continuously differentiable on $\mathcal{R}$ and that there exist positive numbers $\mu_i$ and $L_{ij}$ such that, for all $x\in\mathcal{R}$,
\[
  \nabla_{x_i x_i}^2 f_i(x) \succeq \mu_i\,\nabla^2\psi_i(x_i),\qquad
  \big\|\nabla_{x_i x_j}^2 f_i(x)\big\|_{\psi_j\to\psi_i} \le L_{ij}\quad(j\neq i),
\]
where $\|\cdot\|_{\psi_j\to\psi_i}$ denotes the mixed operator norm induced by the local metrics $\nabla^2\psi_j$ and $\nabla^2\psi_i$.
Then a standard integration-along-segments argument analogous to \cref{def:block-bounds} shows that the mirror block inequality~\eqref{eq:mirror-block-bounds} holds on $\mathcal{R}$ with these $(\mu_i,L_{ij})$, with Bregman divergences $D_{\psi_i}(x_i^*\Vert x_i)$ playing the role of squared distances.

\subsection{Proof of mirror strong monotonicity}

Starting from~\eqref{eq:mirror-block-bounds}, multiply by $w_i>0$ and sum over players to obtain
\[
  \sum_i w_i\big\langle x_i-x_i^*,\,\nabla_{x_i} f_i(x) - \nabla_{x_i} f_i(x^*)\big\rangle
  \;\ge\;
  \sum_i w_i\mu_i D_{\psi_i}(x_i^*\Vert x_i)
  \;-\;
  \sum_{i\neq j} w_i L_{ij}
    \sqrt{D_{\psi_i}(x_i^*\Vert x_i)}\,
    \sqrt{D_{\psi_j}(x_j^*\Vert x_j)}.
\]
Introducing $a_i(x):=\sqrt{w_i D_{\psi_i}(x_i^*\Vert x_i)}$ and $k_{ij}:=L_{ij}\sqrt{w_i/w_j}$ yields
\[
  \big\langle x-x^*,\,F(x)-F(x^*)\big\rangle_w
  \;\ge\;
  a(x)^\top H(w)\,a(x),
\]
with $H(w)$ as in~\eqref{eq:mirror-H-w-def}.
If $H(w)\succeq \alpha I$, then $a(x)^\top H(w)a(x)\ge \alpha\|a(x)\|_2^2 = \alpha\sum_i w_i D_{\psi_i}(x_i^*\Vert x_i)$, which is exactly~\eqref{eq:mirror-strong-mono}.

\subsection{Lyapunov derivative for the mirror flow}

For Legendre mirror maps the change of variables $z_i=\nabla\psi_i(x_i)$ is a diffeomorphism between a primal neighbourhood of $x_i^*$ and a dual neighbourhood of $z_i^*:=\nabla\psi_i(x_i^*)$.
Recall the mirror flow~\eqref{eq:mirror-flow},
\[
  \dot z_i(t) = G_i\big(x(t)\big),\qquad
  x_i(t) = \nabla\psi_i^*(z_i(t)),
\]
with $G=-F$.
Using the representation
\[
  D_{\psi_i}(x_i^*\Vert x_i)
  = \psi_i(x_i^*) - \psi_i(x_i) - \big\langle\nabla\psi_i(x_i),\,x_i^*-x_i\big\rangle,
\]
and differentiating along a trajectory $x(t)$ shows that
\[
  \frac{d}{dt} D_{\psi_i}(x_i^*\Vert x_i(t))
  = \big\langle G_i\big(x(t)\big),\,x_i(t)-x_i^*\big\rangle,
\]
where we have used $\dot z_i = \nabla^2\psi_i(x_i)\dot x_i$ and the Legendre property to cancel the $\psi_i$ and $\langle z_i,x_i\rangle$ terms.
Summing over players with weights $w_i$ yields
\[
  \frac{d}{dt} V\big(x(t)\big)
  = \sum_i w_i \big\langle G_i(x(t)),\,x_i(t)-x_i^*\big\rangle
  = \big\langle x(t)-x^*,\,G(x(t)) - G(x^*)\big\rangle_w,
\]
since $G(x^*)=-F(x^*)=0$ at an interior Nash equilibrium.
Combining this with the mirror strong-monotonicity inequality~\eqref{eq:mirror-strong-mono} and $G=-F$ gives
\[
  \frac{d}{dt} V\big(x(t)\big)
  \le -\,\alpha\,V\big(x(t)\big),
\]
which proves \cref{thm:mirror-flow}.

\subsection{Mirror Euler and mirror RK discretizations}

Finally, the contraction of mirror-Euler and mirror-RK4 schemes follows by transporting the Euclidean arguments of \cref{thm:proj-euler,thm:rk4} to dual coordinates.
In dual space the mirror flow is an ODE $\dot z = G(\nabla\psi^*(z))$ endowed with the weighted Euclidean inner product $\langle\cdot,\cdot\rangle_w$.
The log-norm and Lipschitz bounds induced by mirror block bounds and mirror SGN yield one-sided inequalities for $\langle z-z^*,G(\nabla\psi^*(z)) - G(\nabla\psi^*(z^*))\rangle_w$ exactly as in the Euclidean case.
Explicit Euler and RK4 applied to this dual ODE therefore satisfy the same contractivity estimates as in \cref{thm:proj-euler,thm:rk4} for sufficiently small step sizes.
Numerically, one integrates the unconstrained dual dynamics in $\R^d$ (for example, Fisher-natural logits on the simplex) using standard RK4 stages and maps back to the primal variables via $x=\nabla\psi^*(z)$ only for analysis or termination, so intermediate RK4 stages never hit the primal boundary and no projection on the simplex is required.
When the dual domain itself is constrained, metric nonexpansiveness of the corresponding mirror projection $\Pi_Z$ plays the role of \cref{lem:P-proj} in the contraction argument.

\section{Additional Numerical Results for the LQ Game}
\label{app:lq-exp}

We briefly summarize additional numerical results for the canonical $64$-dimensional LQ game from \cref{subsec:lq-validation}.
All experiments use the balanced metric $M(w_{\mathrm{bal}})$ with $w_1/w_2=b/a$ and the same coupling grid as in \cref{fig:lq-margins}.

\paragraph{Phase diagrams and flow norms.}
For each $\lambda$ we compute the SGN Euler and RK4 step-size bounds and the true stability thresholds obtained from spectral radii of the one-step matrices.
The resulting phase diagrams on the $(\lambda,h)$-plane are shown in \cref{fig:lq-phase}.
In both cases the SGN step-size curves lie strictly inside the true stability boundaries but remain within a moderate factor of them, confirming that the SGN step-size policies are conservative but nontrivial.
\Cref{fig:lq-flow} shows median norms $\|x(t)\|_{M(w_{\mathrm{bal}})}$ along continuous-time flows for representative couplings, together with reference exponential curves at rates $\alpha_{\mathrm{true}}$ and $\alpha_*$; the empirical decay closely tracks the true metric margin whenever SGN holds.

\begin{figure}[htbp]
  \centering
  \includegraphics[width=1.0\linewidth]{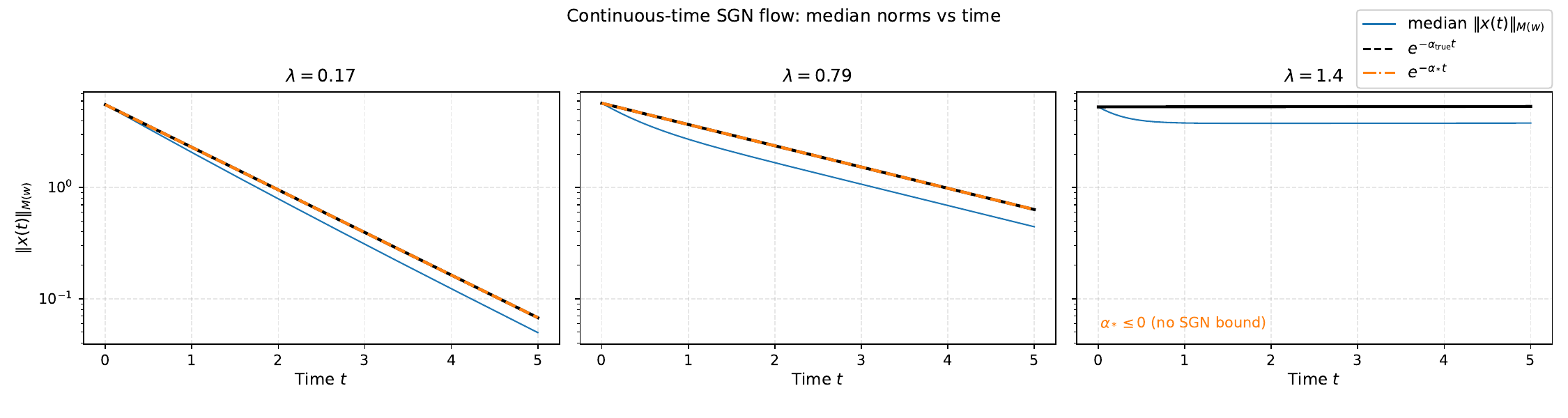}
  \caption{Continuous-time flows for the canonical LQ game in the balanced metric.
  For three representative couplings we plot median norms $\|x(t)\|_{M(w_{\mathrm{bal}})}$ over random initial conditions, together with reference exponentials at rates $\alpha_{\mathrm{true}}$ and $\alpha_*$.
  The empirical decay is consistent with the certified metric margins.}
  \label{fig:lq-flow}
\end{figure}

\paragraph{Structured noise robustness and random LQ ensemble.}
To probe robustness beyond the perfectly structured canonical game we perturb both coupling matrices by spectrally normalized Gaussian noise.
For each noise level $\epsilon\in[0,1]$ we compute the true metric margin and the SGN margin (based on spectral norms of the perturbed couplings) and report the average conservatism ratio $\alpha_{\mathrm{true}}/\alpha_*$ in \cref{fig:lq-noise}.
At $\epsilon=0$ the ratio is close to $1$, while for $\epsilon\to 1$ SGN becomes more conservative but remains a valid lower bound whenever it certifies.

We also ran a small random LQ ensemble, drawing heterogeneous quadratic games with the same $(a,b)$ but random curvature and cross-player blocks, and evaluating Euclidean and SGN margins and RK4 step bounds at several couplings.
With a fixed balanced metric, SGN is intentionally conservative: whenever it certifies a positive margin, the ratio $\alpha_*/\alpha_{\mathrm{true}}$ typically lies between $0.15$ and $1$ over small and moderate couplings, and the logarithmic step-size ratio $\log_{10}(h_{\mathrm{SGN}}/h_{\mathrm{stab}})$ clusters just below zero.
For larger couplings the true metric margin can remain positive while SGN in this fixed metric fails, indicating that metric redesign would be necessary; aggregate statistics and plots are reported in \cref{fig:lq-random}.

\begin{figure}[htbp]
  \centering
  \includegraphics[width=0.6\linewidth]{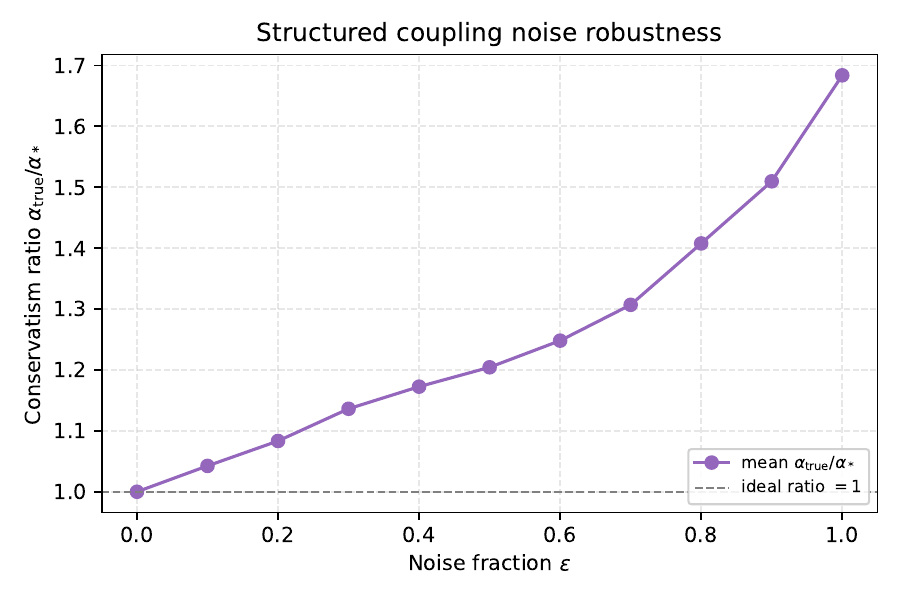}
  \caption{Structured coupling noise robustness for the canonical LQ game.
  The curve shows the average conservatism ratio $\alpha_{\mathrm{true}}/\alpha_*$ as a function of the noise fraction $\epsilon$.
  SGN remains tight in the structured case and becomes gradually more conservative as the couplings become unstructured.}
  \label{fig:lq-noise}
\end{figure}

\begin{figure}[htbp]
  \centering
  \includegraphics[width=0.32\linewidth]{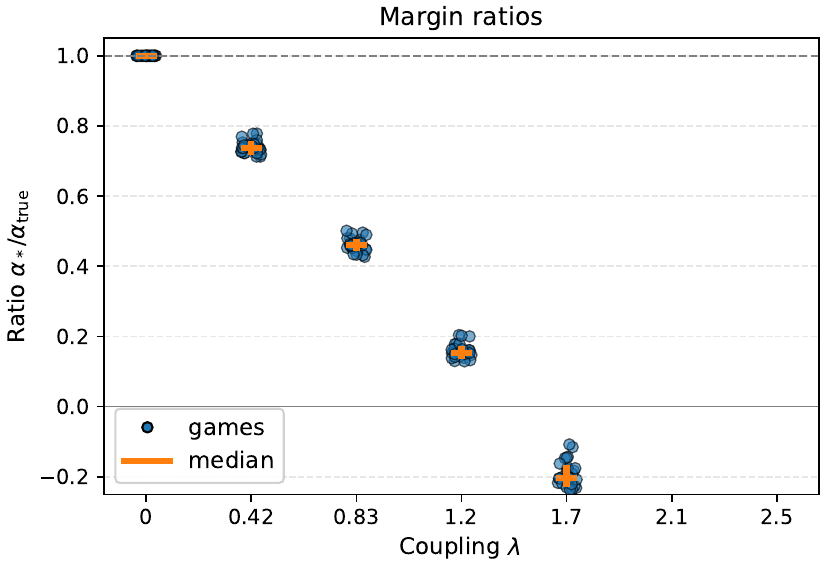}\hfill
  \includegraphics[width=0.32\linewidth]{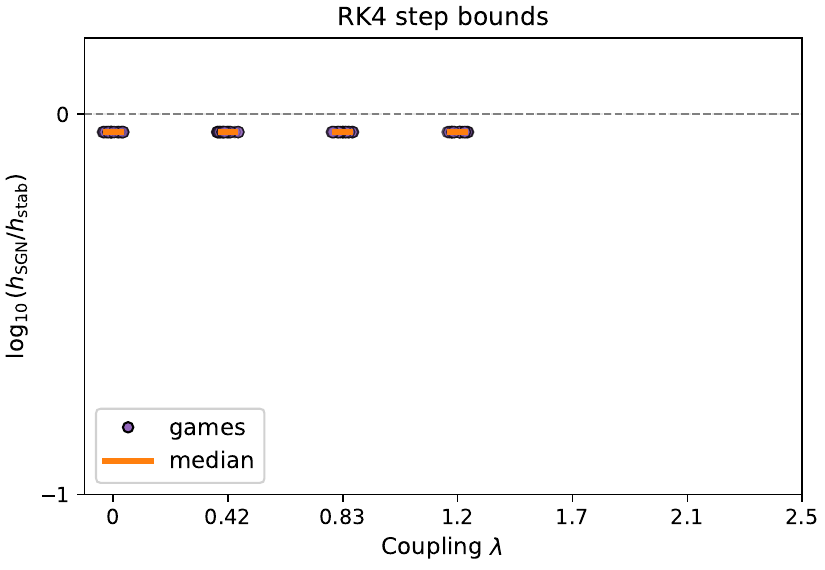}\hfill
  \includegraphics[width=0.32\linewidth]{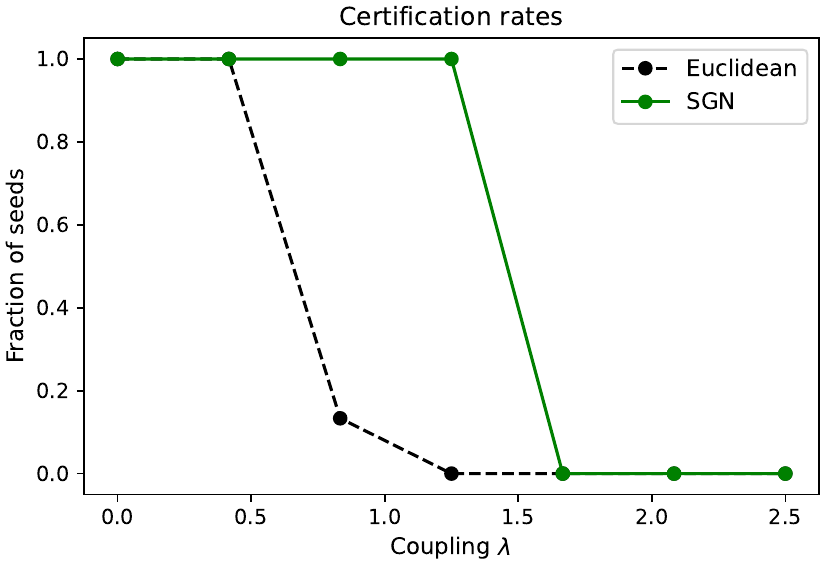}
  \caption{Random LQ ensemble: SGN vs true margins and RK4 step bounds.
  Left: scatter of $\alpha_*/\alpha_{\mathrm{true}}$ per coupling, with medians and interquartile ranges.
  Middle: scatter of $\log_{10}(h_{\mathrm{SGN}}/h_{\mathrm{stab}})$.
  Right: fraction of random instances for which the Euclidean margin is positive vs the fraction for which the SGN margin is positive in the fixed balanced metric.}
  \label{fig:lq-random}
\end{figure}

\end{document}